\pgfplotsset{compat=1.14}
\newcommand{\op}[1]{\mathbf{#1}}
\renewcommand{\vec}[1]{\mathbf{#1}}
\newcommand{\norm}[1]{\left|\left|#1\right|\right|}
\newcommand{\id}{\op{I}}
\newcommand{\fourier}{\op{F}}
\newcommand{\ifourier}{\op{F}^{-1}}
\newcommand{\rotop}{\op{R}}
\newcommand{\FBP}{\text{FBP}}
\newcommand{\sliceop}{\op{S}}
\newcommand{\ctf}{\op{C}}
\newcommand{\expectation}[1]{\mathop{\mathbb{E}}_{#1}}
\newcommand{\proj}{\vec{t}}
\newcommand{\ts}{\vec{t}}
\newcommand{\tsOne}{{\color{blue}\ts^\text{1}}}
\newcommand{\tsZero}{{\color{red}\ts^\text{0}}}
\newcommand{\network}{\mathrm{f}_\nnweights}
\newcommand{\selfLoss}{\ell}
\newcommand{\riskfun}{\mathrm{R}}
\newcommand{\lossfun}{\mathrm{L}}
\newcommand{\prob}{\mathrm{P}}
\newcommand{\vol}{\vec{v}}
\newcommand{\truevol}{\vol^*}
\newcommand{\recvol}{\hat{\vol}}
\newcommand{\volZero}[1]{{\color{red}\vol_{{\color{black}#1}}^\text{0}}}
\newcommand{\volZeros}{\{\volZero{i}\}_{i=1}^N}
\newcommand{\volOne}[1]{{\color{blue}\vol_{{\color{black}#1}}^\text{1}}}
\newcommand{\volOnes}{\{\volOne{i}\}_{i=1}^N}
\newcommand{\noisierVolZero}[1]{{\color{red}\tilde{\vol}_{{\color{black}#1}}^\text{0}}}
\newcommand{\noisierVolZeros}{\{\noisierVolZero{i,\anglesPhi_i}\}_{i=1}^N}
\newcommand{\struc}{\vec{v}}
\newcommand{\truestruc}[1]{\struc_{#1}^*}
\newcommand{\strucZero}[1]{{\color{red}\struc_{{\color{black}#1}}^\text{0}}}
\newcommand{\strucOne}[1]{{\color{blue}\struc_{{\color{black}#1}}^\text{1}}}
\newcommand{\noisierStrucZero}[1]{{\color{red}\tilde{\struc}_{{\color{black}#1}}^\text{0}}}
\newcommand{\crosscorr}{\text{CC}}
\newcommand{\mean}{\text{mean}}
\newcommand{\noise}{\vec{n}}
\newcommand{\noiseOne}{{\color{blue}\noise^\text{1}}}
\newcommand{\noiseZero}{{\color{red}\noise^\text{0}}}
\newcommand{\noisierMask}{{\color{black}\tilde{M}}}
\newcommand{\mask}{M}
\newcommand{\nnweights}{\text{$\boldsymbol{\theta}$}}
\newcommand{\anglesPhi}{\text{$\boldsymbol{\varphi}$}}
\newcommand{\reals}{\mathbb{R}}
\newtheorem{lem}{Lemma}
\newtheorem{prop}{Proposition}
\begin{document}

\title{A Deep Learning Method for Simultaneous Denoising and Missing Wedge Reconstruction in Cryogenic Electron Tomography}
\author[]{Simon Wiedemann}
\author[]{Reinhard Heckel
%$^*$
}

\affil[]{Department of Computer Engineering, Technical University of Munich \{simonw.wiedemann,~reinhard.heckel\}@tum.de.}

%\affil[]{Department of Computer Engineering, Technical University of Munich, Arcistrasse 21, 80333, Munich, Germany. (E-Mails: simonw.wiedemann@tum.de, reinhard.heckel@tum.de)}
\date{}

\maketitle

\begin{abstract} 
    Cryogenic electron tomography is a technique for imaging biological samples in 3D. A microscope collects a series of 2D projections of the sample, and the goal is to reconstruct the 3D density of the sample called the tomogram. 
    Reconstruction is difficult as the 2D projections are noisy and can not be recorded from all directions, resulting in a missing wedge of information.
    Tomograms conventionally reconstructed with filtered back-projection suffer from noise and strong artifacts due to the missing wedge. 
    Here, we propose a deep-learning approach for simultaneous denoising and missing wedge reconstruction called DeepDeWedge.
    The algorithm requires no ground truth data and is based on fitting a neural network to the 2D projections using a self-supervised loss.
    DeepDeWedge is simpler than current state-of-the-art approaches for denoising and missing wedge reconstruction, performs competitively and produces more denoised tomograms with higher overall contrast.
\end{abstract}

\section{Introduction}
Cryogenic electron tomography (cryo-ET) is a powerful cryo-electron microscopy (cryo-EM) technique for obtaining 3D models of biological samples such as cells, viruses, and proteins. An important application of cryo-ET is visualizing biological macromolecules like proteins in situ, i.e., in their (close-to) native environment. Imaging in situ preserves biological context, which can greatly improve the understanding of the workings of macromolecules \cite{wan2016cryo}. 

In cryo-ET, the sample to be imaged is first prepared on a grid and then frozen. Next, a transmission electron microscope records a tilt series, which is a collection of 2D projections of the sample's 3D scattering potential or density. Each projection in the tilt series is recorded after tilting the sample for a number of degrees around the microscope's tilt axis. 

From this tilt series, a tomogram, i.e., a discretized estimate of the sample's 3D density, can be estimated using computational techniques. For this inverse problem, numerous approaches have been proposed. The most commonly used tomographic reconstruction technique is filtered back-projection (FBP) \cite{radermacher1988three, wan2016cryo, turk2020promise}. Two major obstacles limit the resolution and interpretability of tomograms reconstructed with FBP and similar methods:
    \begin{enumerate}
        \item \textbf{Noisy data:} The total electron dose during tilt series acquisition must be low because biological samples are sensitive to radiation damage. Thus, the individual projections of the tilt series have low contrast and a low signal-to-noise ratio (SNR).
        \item \textbf{A missing wedge of information:} The range of angles at which useful images can be collected is often limited to, for example, $\pm 60^\circ$  rather than the desired full range of $\pm 90^\circ$. This is due to the increased thickness of the sample in the direction of the electron beam for tilt angles of large magnitude~\cite{liu2022isotropic}. 
        The missing data in the tilt series is wedge-shaped in the Fourier domain.
    \end{enumerate}

To define the missing wedge and motivation for our work, we now state a model for the tilt series acquisition process in cryo-ET: A tilt series $\ts = \big(\proj_{-K},\ldots,\proj_{K}\big)$ is a collection of 2D projections, where each 2D projection
$\proj_k$ is a measurement of an underlying ground truth volume $\truevol$ obtained with the electron microscope. 
%The relation of the 2D projections and the ground truth volume is as follows: 
By the Fourier slice theorem (see e.g.\ \cite{malzbender1993fourier}), the 2D Fourier transform $\fourier \proj_k$ of each projection image $\proj_k$ of a tilt series is a noisy observation of a 2D central slice through true volume's 3D Fourier transform $\fourier\truevol$ multiplied with an additional filter, i.e.,
    \begin{equation}
    \label{eq:projection_image_formation_fourier}
        \fourier \proj_k = \ctf_k \cdot \sliceop \rotop_k \fourier \truevol + \fourier \vec{\noise}_k.
    \end{equation}
In Equation \eqref{eq:projection_image_formation_fourier}, the rotation operator 
$\rotop_k$
spatially rotates the 3D Fourier transform $\fourier \truevol$ of the volume $\truevol$ by the tilt angle $\alpha_k$ around the microscope's tilt axis. Then, the slice-operator 
$\sliceop$
extracts the central 2D slice of the volume's Fourier transform that is perpendicular to the microscope's optical axis. The filter 
$\ctf_k$ 
is the contrast transfer function (CTF) of the microscope and models optical aberrations. The term $\fourier \vec{\noise}_k$ is the Fourier transform of a random 2D image-domain noise term 
$\vec{\noise}_k$. 
It is often assumed that the noise $\vec{\noise}_k$ comes from a Poisson distribution (shot noise). Another common assumption is that the Fourier-domain noise $\fourier \vec{\noise}_k$ is Gaussian, but not necessarily white \cite{bendory2020single, frangakis2021s}. 
In this work, we make the weaker assumption that the 2D noise terms $\vec{\noise}_{k}$ and $\vec{\noise}_\ell$ of any two distinct projections indexed with $k$ and $\ell$ ($k \neq \ell$) have zero means and are independent. 

Equation \eqref{eq:projection_image_formation_fourier} is commonly used as a Fourier-domain model of the image formation process in cryo-EM \cite{sigworth2016principles, bendory2020single}. As the Fourier slice theorem assumes continuous representations of volumes and images, this model is exact only in the continuous case. In this work, we consider discrete representations of volumes and images, as is common in cryo-EM practice. Thus, Equation \eqref{eq:projection_image_formation_fourier} holds only approximately.

The range of tilt angles $\alpha_k$ that yield useful projections $\proj_k$ is typically limited to, e.g., $\pm 60^\circ$ rather than the full range $\pm 90^\circ$. Therefore, by Equation \eqref{eq:projection_image_formation_fourier} there is a wedge-shaped region of the sample's Fourier representation $\fourier \truevol$ which is not covered by any of the Fourier slices $\fourier \proj_k$. 

We consider the problem of reconstructing a tomogram, i.e., a discretized estimate of a sample's density from a noisy, incomplete tilt series. This is a challenging inverse problem due to the high noise level and the missing wedge. 

Recently, deep-learning-based methods for denoising and missing wedge reconstruction have been proposed. However, these are effective for denoising and missing wedge reconstruction individually but not simultaneously. Specifically, IsoNet~\cite{liu2022isotropic}, a closely related work, 
does well at missing wedge reconstruction, but its denoising performance is low compared to state-of-the-art denoising methods \cite{maldonado2023fourier}. The current state-of-the-art approaches for denoising in cryo-ET build on Noise2Noise \cite{lehtinen2018noise2noise}, a framework for deep-learning-based image denoising.
Popular software packages that implement Noise2Noise-based denoising methods for cryo-ET tomograms are CryoCARE~\cite{buchholz2019cryo}, Topaz~\cite{bepler2020topaz} and Warp~\cite{tegunov2019real}.

In this paper, we propose DeepDeWedge, a deep learning-based approach for tomogram reconstruction that simultaneously performs well on denoising and missing wedge reconstruction. 

DeepDeWedge takes one (or more; see Section \ref{sec:our_approach:algorithm}) tilt series as input and aims to estimate a noise-free 3D reconstruction of the samples' density with a filled-in missing wedge. 
%The algorithm can also be applied to a small dataset of tilt-series, typically two to ten, that were collected from samples of the same type, for example, instances of the same cell or solutions containing purified particles.
%, in which case it estimates the noise-free, missing-wedge-corrected 3D densities of all the samples.
To achieve this, we propose fitting a randomly initialized network with a self-supervised loss for simultaneous denoising and missing wedge reconstruction. After fitting, the network is applied to the same data to estimate the tomogram. DeepDeWedge only uses the tilt series of the density we wish to reconstruct and no other training data. DeepDeWedge is most related to Noise2Noise-based denoising approaches and to IsoNet, both introduced above. We discuss the exact relations in Section \ref{sec:our_approach:related_work} after describing our algorithm. 
DeepDeWedge performs on par with IsoNet on a pure missing wedge reconstruction problem and achieves state-of-the-art denoising performance on a pure denoising problem.
Moreover, we find that the performance of DeepDeWedge for the joint denoising and wedge reconstruction problem is similar to the two-step approach of applying IsoNet to tomograms denoised with a state-of-the-art Noise2Noise-like denoiser. Moreover, DeepDeWedge is simpler and requires fewer hyperparameters to tune than the two-step approach.

\begin{figure}[!t]
    \centering
    \includegraphics[width=\textwidth]{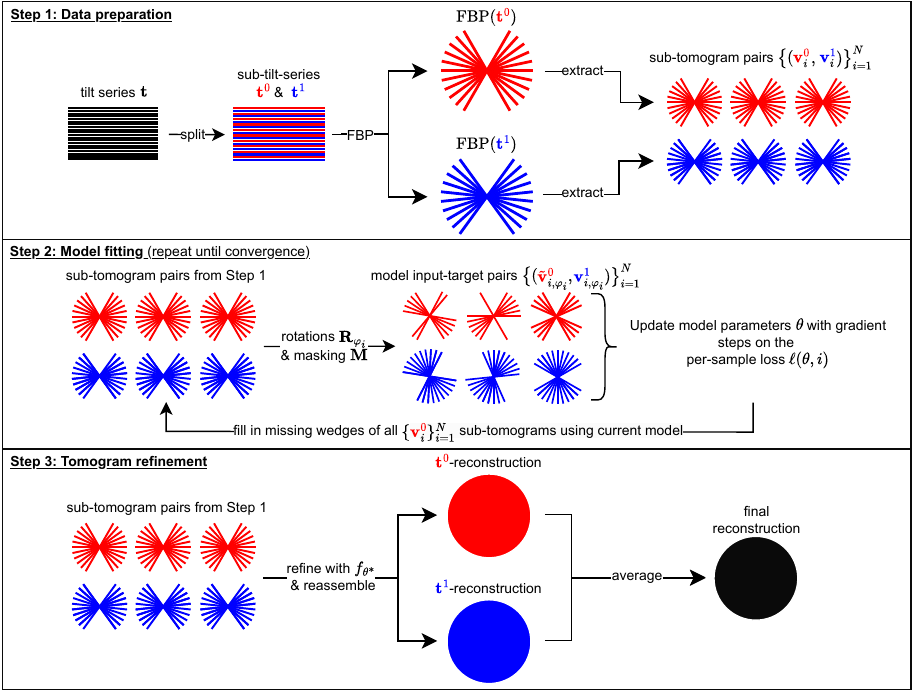}
    \caption{
    %%%\color{blue} 
    Illustration of DeepDeWedge. For simplicity, we show the 2D tilt series images as 1D Fourier slices and all 3D tomograms as 2D objects in the Fourier domain. Recall that tilt series images, tomograms, and sub-tomograms are objects in the image domain. The figure shows the splitting approach where the tilt series is split into even and odd projections.
 %, but other splitting approaches exist (see main text).
    }

    \label{fig:our_approach}
\end{figure}

\section{DeepDeWedge: Denoising and Missing Wedge Reconstruction Algorithm}
\label{sec:our_approach:algorithm}

DeepDeWedge takes a single tilt series $\ts$ as input and produces a denoised, missing-wedge-filled tomogram. The method can also be applied to a dataset containing multiple (typically up to 10) tilt series from different samples of the same type, for example, sections of different cells, which share the same cell type. DeepDeWedge consists of the following three steps, illustrated in Figure \ref{fig:our_approach}:

\begin{enumerate}
\item \textbf{Data preparation:}
First, split the tilt series $\ts$ into two sub-tilt-series $\tsZero$ and $\tsOne$ with the even/odd split or the frame-based split. The even/odd split partitions the tilt series into even and odd projections based on their order of acquisition. 
The frame-based split can be applied if the tilt series is collected using dose fractionation and entails averaging only the even and odd frames recorded at each tilt angle. 
We recommend the frame-based splitting approach whenever possible. 
After splitting, we have two sub-tilt-series.

Next, reconstruct both sub-tilt-series independently with FBP and apply CTF correction. This yields a pair of two coarse reconstructions $\big( \FBP(\tsZero), \FBP(\tsOne) \big)$ of the sample's 3D density. 
from both FBP reconstructions. The size and number $N$ of these sub-tomogram cubes is a hyperparameter. Experiments on synthetic data presented in the supplementary information
%\ref{appx:ablations:subtomogram_size} 
suggest that larger sub-tomograms tend to yield better results up to a point. 

\item \textbf{Model fitting:}
Fit a randomly initialized network $\network$, we use a U-Net \cite{ronneberger2015u}, with weights $\nnweights$ by repeating the following steps until convergence:
    \begin{enumerate}
        \item \textit{Generate model inputs and targets:} For each of the sub-tomogram pairs  $\{(\volZero{i},\volOne{i})\}_{i=1}^N$ generated in Step 1, sample a rotation $\rotop_{\anglesPhi_i}$ parameterized by Euler angles $\anglesPhi_i$ from the uniform distribution on the group of 3D rotations, and construct a model input $\noisierVolZero{i, \anglesPhi_i}$ and target $\volOne{i, \anglesPhi_i}$ 
        by applying the rotation $\rotop_{\anglesPhi_i}$ to both sub-tomograms and adding an artificial missing wedge to the rotated sub-tomogram $\rotop_{\anglesPhi_i} \volZero{i}$, as shown in the center panel of Figure \ref{fig:our_approach}. The missing wedge is added by taking the Fourier transform of the rotated sub-tomograms and multiplying them with a binary 3D mask $\op{\mask}$ that zeros out all Fourier components that lie inside the missing wedge.  
        Repeating this procedure for all sub-tomogram pairs $\{(\volZero{i},\volOne{i})\}_{i=1}^N$ yields a set of $N$ triplets consisting of model input, target sub-tomogram, and angle $\big\{ \big(\noisierVolZero{i,\anglesPhi_i}, \volOne{i,\anglesPhi_i},\anglesPhi_i\big)\}_{i=1}^N$.

        \item \textit{Update the model:}
        Update the model weights $\nnweights$ by performing gradient steps to minimize the per-sample loss 
                \begin{equation}
                    \label{eq:self_objective}
                    \selfLoss\left( \nnweights,i
                    \right) 
                    = 
                    \norm{ \left( \op{\mask} \op{\mask}_{\anglesPhi_i} + 2\op{\mask}^C \op{\mask}_{\anglesPhi_i} \right) \fourier \left( \network \big( \noisierVolZero{i,\anglesPhi_i} \big) - \volOne{i,\anglesPhi_i} \right)}_2^2.
                \end{equation}
            Here, $\op{\mask}_{\anglesPhi_i}$ is the rotated version of the wedge mask $\op{\mask}$ and $\op{\mask}^C := \op{I} - \op{M}$ is the complement of the mask.
            For the gradient updates, we use the Adam optimizer \cite{kingma2015adam}~and perform a single pass through the $N$ model input and target sub-tomograms generated before.
        
        \item \textit{Update the missing wedges of the model inputs:} For each $i =1,\ldots,N$, update the missing wedge the i-th sub-tomogram $\volZero{i}$ produced in Step 1 by passing it through the current model $\network$, and inserting the predicted content of the missing wedge, as follows
            \begin{equation}
                \volZero{i} \leftarrow \ifourier \Big( \op{\mask} \fourier \volZero{i} + \op{\mask}^C \fourier \network\left( \volZero{i} \right)   \Big).
            \end{equation}
        In the next input-target generation step, the model inputs $\noisierVolZeros$
        are constructed using the updated sub-tomograms. We do not update the missing wedges of the sub-tomograms 
        $\{\volOne{i}\}_{i=1}^N$
        used to generate the model targets, since their missing wedges are masked out in the per-sample loss, c.f. Equation \eqref{eq:self_objective}, and therefore play no role in model fitting.
    \end{enumerate}

\item \textbf{Tomogram refinement:}
Pass the original, non-updated sub-tomograms $\volZeros$
through the fitted model $f_{\nnweights^*}$ from Step 2, and reassemble the model outputs $\{f_{\nnweights^*}\left(\volZero{i}\right)\}_{i=1}^N$
into a full-sized tomogram. Repeat the same for the sub-tomograms $\volOnes$. Finally, average both reconstructions to obtain the final denoised and missing-wedge corrected tomogram. 
\end{enumerate}

\subsection{Motivation for the Three Steps of the Algorithm}
\label{sec:our_approach:motivation}

We now provide a brief motivation for each of the three steps of DeepDeWedge. 

\paragraph{Motivation for Step 1.}
In Step 1, we split the tilt series into two disjoint parts 
to obtain measurements with independent noise. As the noise on the individual projections or frames is assumed to be independent, the reconstructions $\FBP(\tsZero)$ and $\FBP(\tsOne)$ are noisy observations of the same underlying sample with independent noise terms. 
Those are used in Step 2 for the self-supervised Noise2Noise-inspired loss. 

Tilt series splitting is also used in popular implementations of Noise2Noise-like denoising methods for cryo-ET \cite{buchholz2019cryo, bepler2020topaz, tegunov2019real}. The frame-based splitting procedure was proposed by Buchholz et al.\ \cite{buchholz2019cryo}, who found that it can improve the performance of Noise2Noise-like denoising over the even/odd split. 

\paragraph{Motivation for Step 2.}
%Each epoch of the fitting process consists of two steps: Generation of a set of model input and target sub-tomogram pairs $\big(\noisierVolZero{\anglesPhi}, \volOne{\anglesPhi}\big)$, followed by updating the model parameters $\nnweights$ based on a loss taking this set into account. These updates are gradient steps with gradients based on the per-sample loss $\selfLoss$ from Equation 
Step 2 of DeepDeWedge is to fit a neural network to perform denoising and missing wedge reconstruction, for which we have designed a specific loss function $\ell$ (Equation \eqref{eq:self_objective}). We provide a brief justification for the loss function here; a detailed theoretical motivation is presented in the following section.

As the masks $\op{\mask} \op{\mask}_{\anglesPhi_i}$, and $\op{\mask}^C \op{\mask}_{\anglesPhi_i}$ are orthogonal, the loss value $\selfLoss(i,\nnweights)$ can be expressed as the sum of two the two terms $\norm{ \op{\mask} \op{\mask}_{\anglesPhi_i} \fourier \left( \network \left( \noisierVolZero{i, {\anglesPhi_i}} \right) - \volOne{i,\anglesPhi_i} \right)}_2^2$, and $\norm{2\op{\mask}^C \op{\mask}_{\anglesPhi_i} \fourier \left( \network \left( \noisierVolZero{i,\anglesPhi_i} \right) - \volOne{i,\anglesPhi_i} \right)}_2^2$. The first summand, i.e., $\norm{ \op{\mask} \op{\mask}_{\anglesPhi_i} \fourier \left( \network \left( \noisierVolZero{i,\anglesPhi_i} \right) - \volOne{i,\anglesPhi_i} \right)}_2^2$, is the squared L2 distance between the network output and the target sub-tomogram $\volOne{i,\anglesPhi_i}$ on all Fourier components that were not masked out by the two missing wedge masks $\op{\mask}$ and $\op{\mask}_{\anglesPhi_i}$. As we assume the noise in the target to be independent of the noise in the input $\noisierVolZero{i,\anglesPhi_i}$, minimizing this part incentivizes the network to learn to denoise these Fourier components. This is inspired by the Noise2Noise principle. 

The second summand, i.e., $\norm{2\op{\mask}^C \op{\mask}_{\anglesPhi_i} \fourier \left( \network \left( \noisierVolZero{i,\anglesPhi_i} \right) - \volOne{i,\anglesPhi_i} \right)}_2^2$, incentivizes the network $\network$ to restore the data that we artificially removed with the mask $\op{\mask}$, and can be considered as a Noisier2Noise-like loss \cite{moran2020noisier2noise} (see Supplementary Information \ref{appx:background:noisier2noise}
%~\ref{sec:background:noisier2noise} 
for background).   
For this part, it is important that we rotate both volumes, which moves their original missing wedges to a new, random location.

In the last part of Step 2, we correct the missing wedges of the sub-tomograms $\volZeros$ using the current model $\network$. Therefore, as the model fitting proceeds, the model inputs $\noisierVolZeros$ will more and more resemble sub-tomograms with only one missing wedge, i.e., the one we artificially remove from the partially corrected sub-tomograms $\volZeros$ with the mask $\op{\mask}_{\anglesPhi_i}.$ This is a heuristic which is supposed to help the model perform well on the original sub-tomograms $\volZeros$ and $\volOnes$, which have only one missing wedge and which we use as model inputs in Step 3. An analogous approach for Noisier2Noise-based image denoising was proposed by Zhang et al.\ \cite{zhang2022idr}.

\paragraph{Motivation for Step 3.}
In Step 3, we use the fitted model from Step 2 to produce the final denoised and missing wedge filled tomogram. To use all the information contained in the tilt series $\ts$ for the final reconstruction, we separately refine the sub-tomograms from the FBP reconstructions $\FBP\left(\tsZero\right)$ and $\FBP\left(\tsOne\right)$ of the sub-tilt-series $\tsZero$ and $\tsOne$, and average them. For this, we apply a special normalization to the model inputs, which is described in Supplementary Information \ref{appx:normalization}.

%%%
\subsection{Related Work}
\label{sec:our_approach:related_work}

DeepDeWedge builds on Noise2Noise-based denoising methods and is related to the denoising and missing wedge-filling method IsoNet \cite{liu2022isotropic}. We first discuss the relation between DeepDeWedge and Noise2Noise-based methods, which do not reconstruct the missing wedge.

\paragraph{Relation to Noise2Noise-Like Denoising in Cryo-ET.}
Noise2Noise-based denoising algorithms for cryo-ET as implemented in CryoCARE \cite{buchholz2019cryo} or Warp \cite{tegunov2019real} take one or more tilt series as input and return denoised tomograms. A randomly initialized network $\network$ is fitted for denoising on sub-tomograms of FBP reconstructions $\FBP(\tsZero)$ and $\FBP(\tsOne)$ of sub-tilt-series $\tsZero$ and $\tsOne$ obtained from a full tilt series $\ts$. %As for our method, other reconstruction algorithms could be used, but we discuss FBP for consistency. 
The model is fitted by minimizing a loss function (typically the mean-squared error), between the output of the model $\network$ applied to one noisy sub-tomogram and the corresponding other noisy sub-tomogram. The fitted model is then used to denoise the two reconstructions $\FBP(\tsZero)$ and $\FBP(\tsOne)$, which are then averaged to obtain the final denoised tomogram. Contrary to those denoising methods, DeepDeWedge fits a network not only to denoise, but also to fill the missing wedge. 

\paragraph{Relation to IsoNet.}
Our method is most closely related to IsoNet, which can also do denoising and missing wedge reconstruction. 
%We first explain the steps of IsoNet and then discuss its relation DeepDeWedge. 
IsoNet takes a small set of, say, one to ten tomograms and produces denoised, missing-wedge corrected versions of those tomograms. 
Similar as for DeepDeWedge, a randomly initialized network is fitted for denoising and missing wedge reconstruction on sub-tomograms of these tomograms, however the fitting process is different: Inspired by Noisier2Noise, the model is fitted on the task of mapping sub-tomograms that are further corrupted with an additional missing wedge and additional noise onto their non-corrupted versions. After each iteration, the intermediate model is used to predict the content of the original missing wedges of all sub-tomograms. The predicted missing wedge content is inserted into all sub-tomograms, which serve as input to the next iteration of the algorithm. 

Different to IsoNet, our denoising approach is Noise2Noise-like, as in CryoCARE. This leads to better denoising performance, as we will see later, as well as requiring fewer assumptions and no hyperparameter tuning. Specifically, Noisier2Noise-like denoising requires knowledge of the noise model and strength (see Supplementary Information \ref{appx:background:noisier2noise}).
%~\ref{sec:background:noisier2noise}). 
As this knowledge is typically unavailable, Liu et al.\ \cite{liu2022isotropic} propose approximate noise models from which the user has to choose. 

After model fitting, the user must manually decide which iteration and noise level gave the best reconstruction. Thus, IsoNet's Noisier2Noise-inspired denoising approach requires several hyperparameters for which good values exist but are unknown. Therefore, IsoNet requires tuning to achieve good results. 
Our denoising approach introduces no additional hyperparameters and does not require knowledge of the noise model and strength.

The main commonality between DeepDeWedge and IsoNet is the Noisier2Noise-like mechanism for missing wedge reconstruction, which consists of artificially removing another wedge from the sub-tomograms and fitting the model to reconstruct the wedge. 

Moreover, like IsoNet, DeepDeWedge fills in the missing wedges of the model inputs. In IsoNet, one also has to fill in the missing wedges of the model targets. This is necessary because, contrary to our loss $\selfLoss$ form Equation \eqref{eq:self_objective}, IsoNet's loss function does not ignore the targets' missing wedges via masking in the Fourier domain. 

\paragraph{Relation to Other Work.}
Another line of works related to DeepDeWedge considers domain-specific tomographic reconstruction methods that incorporate prior knowledge of biological samples into the reconstruction process to compensate for missing wedge artifacts, for example, ICON \cite{deng2016icon}, and MBIR \cite{yan2019mbir}. For an overview of such reconstruction methods, we refer to the introductory sections of works by Ding et al.\ \cite{ding2019joint} and Bohning et al.\ \cite{bohning2022compressed}. Liu et al.\ \cite{liu2022isotropic} found that IsoNet outperforms both ICON and MBIR.

DeepDeWedge is also conceptually related to un-trained neural networks, which reconstruct an image or volume based on fitting a neural network to data~\cite{ulyanov_DeepImagePrior_2018,heckel_DeepDecoderConcise_2019}. Un-trained networks also only rely on fitting a neural network to given measurements. However, they rely on the bias of convolutional neural networks towards natural images~\cite{heckel_DenoisingRegularizationExploiting_2020,heckel_CompressiveSensingUntrained_2020}, whereas in our setup, we fit a network on measurement data to be able to reconstruct from the same measurements. 

For cryo-EM-related problems other than tomographic reconstruction, deep learning approaches for missing data reconstruction and denoising have also recently been proposed. Zhang et al.\ \cite{zhang2023method} proposed a method to restore the state of individual particles inside tomograms, and Liu et al.\ \cite{liu2023resolving} proposed a variant of IsoNet to resolve the preferred orientation problem in single-particle cryo-EM \cite{liu2023resolving}. 

\subsection{Theoretical Motivation for the Loss Function}

\begin{figure}[!t]
    \centering
    \includegraphics{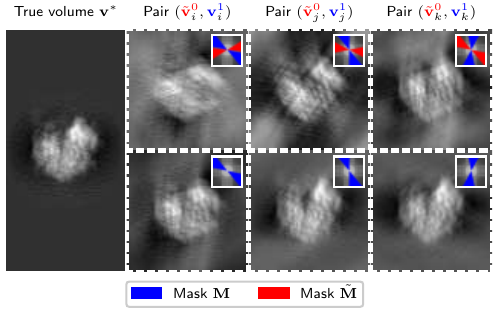}
    \caption{
    Illustration of a single ground-truth structure $\truestruc{}$, and three model input-target pairs corresponding to three different random positions of the original (blue) and additional (red) missing wedge for the setup of our theoretical motivation. 
    For simplicity, we visualize a 2D structure with no additive noise and random in-plane rotations. 
    The inset boxes in each patch show the absolute values of the Fourier transforms of the images and regions that are zeroed out by the missing wedge masks.
    }
    \label{fig:theory_setup}
\end{figure}

Here, we present a theoretical result that motivates the choice of our per-sample loss $\selfLoss$ defined in Equation \eqref{eq:self_objective}. The discussion in this section does not consider the heuristic of updating the missing wedges of the model inputs, which is part of DeepDeWedge's model fitting step. Moreover, we consider an idealized setup that deviates from practice to motivate our loss.

We assume access to data that consists of many noisy, missing-wedge-affected 3D observations of a fixed ground truth 3D structure $\truestruc{} \in \mathbb{R}^{N \times N \times N}$. 
Specifically, data is generated as two measurements (in the form of volumes) of the unknown ground-truth volume of interest, $\truestruc{}$, as 
    \begin{equation}
        \label{eq:prop_data}
        \strucZero{} = \ifourier \op{\mask} \fourier\big( \truestruc{} + \vec{\noiseZero}\big) ,
        \qquad
        \strucOne{} = \ifourier \op{\mask} \fourier \big( \truestruc{} + \vec{\noiseOne}\big),
    \end{equation}
where $\vec{\noiseZero}, \vec{\noiseOne} \in \mathbb{R}^{N \times N \times N}$ are random noise terms and $\op{\mask} \in \{0,1\}^{N \times N\times N}$ is the missing wedge mask. 
From the first measurement, we generate a noisier observation $\noisierStrucZero{} = \ifourier \op{\noisierMask} \fourier \strucZero{}$ by applying a second missing wedge mask $\op{\noisierMask}$. The noisier observation has two missing wedges: the wedge introduced by the first and the wedge introduced by the second mask. 
We assume that the two masks follow a joint and symmetric distribution, e.g., that for each mask, a random wedge is chosen uniformly at random. Figure \ref{fig:theory_setup} illustrates three data points in 2D.

We then train a neural network $\network$ to minimize the loss
\begin{align}
    \label{eq:self_objective_prop}
                \lossfun (\nnweights) = \expectation{\op{\mask}, \op{\noisierMask}, \noiseZero, \noiseOne} 
                \left[
                \norm{ \big( \op{\noisierMask} \op{\mask} +2 \op{\noisierMask}^C \op{\mask}\big) \fourier \Big( \network \big( \noisierStrucZero{} \big) - \strucOne{}\Big)}_2^2 
                \right],
\end{align}
where the expectation is over the random masks and the noise term. Note that this resembles training on infinitely many data points, with a very similar loss than the original loss~\eqref{eq:self_objective}; the main difference is that in the original loss, the volume is rotated randomly but the mask $\op{\mask}$ is fixed, while in the setup considered in this section, the volume is fixed but the masks $\op{\mask}$ and $\op{\noisierMask}$ are random.

After training, we can use the network to estimate the ground-truth volume by applying the network to another noisy observation $\noisierStrucZero{}$. The following proposition, whose proof can be found in Supplementary Information \ref{appx:theory},
%\ref{appx:theory}, 
establishes that this is equivalent to training the network on a supervised loss to reconstruct the input $\noisierStrucZero{}$, provided the two masks are non-overlapping. 

\begin{prop}
\label{prop:selfsupervised_is_supervised}
Assume that the noise $\vec{\noiseOne}$ is zero-mean and independent of the noise $\vec{\noiseZero}$, and of the masks $(\op{\mask},\op{\noisierMask})$, and assume that the noise $\vec{\noiseZero}$ is also independent of the masks. Moreover, assume that the joint probability distribution $P$ of the missing wedge masks $\op{\mask}$ and $\op{\noisierMask}$ is symmetric, i.e., $\prob(\op{\mask}, \op{\noisierMask}) = \prob(\op{\noisierMask}, \op{\mask})$, and that the missing wedges do not overlap. Then the loss $\lossfun$ is proportional to the supervised loss

            \begin{align}
            \label{eq:sup_objective_prop}
                \riskfun(\nnweights) = 
                \expectation{\op{\mask}, \op{\noisierMask}, \noiseZero} \left[ \norm{  \network\big(\vec{\noisierStrucZero{}}\big) - \vec{\truestruc} }_2^2 \right],
            \end{align}
        i.e., $
                \lossfun (\nnweights) = \riskfun(\nnweights) + c,
            $
        where $c$ is a numerical constant independent of the network parameters $\nnweights$.
    \end{prop}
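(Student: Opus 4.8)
The plan is to work entirely in the Fourier domain and to reduce the weighted self-supervised loss to the unweighted supervised risk in three moves: a Parseval step, a Noise2Noise-style cancellation of the target noise that produces the additive constant $c$, and a symmetrization over the two random masks that turns the weighted squared error into the plain squared error. First I would apply Parseval so that $\norm{\cdot}_2$ on volumes equals $\norm{\fourier\cdot}_2$ up to the fixed unitary constant of the transform, which I absorb into $c$. I then substitute the data model. Because $\fourier\strucOne{} = \op{\mask}\fourier(\truestruc{} + \noiseOne)$ and the weight operator $\op{\noisierMask}\op{\mask} + 2\op{\noisierMask}^C\op{\mask}$ is supported on $\{\op{\mask}=1\}$, the factor $\op{\mask}$ multiplying the target may be dropped, so the argument of the loss becomes $\big(\op{\noisierMask}\op{\mask} + 2\op{\noisierMask}^C\op{\mask}\big)\big(\fourier\network(\noisierStrucZero{}) - \fourier\truestruc{} - \fourier\noiseOne\big)$. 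The structural fact I would record here is that $\noisierStrucZero{} = \ifourier\op{\noisierMask}\op{\mask}\fourier(\truestruc{}+\noiseZero)$ depends on the masks only through the symmetric product $\op{\mask}\op{\noisierMask}$ and on the noise only through $\noiseZero$; in particular $\network(\noisierStrucZero{})$ is independent of $\noiseOne$.

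Second I would expand the squared norm into an error term, a cross term, and a pure-noise term, and take the expectation over $\noiseOne$ first, conditioning on the masks and on $\noiseZero$. Since $\noiseOne$ is zero-mean and independent of both the masks and of $\network(\noisierStrucZero{})$, the cross term vanishes in expectation (taking real parts of the complex inner product). The remaining pure-noise term $\expectation{\op{\mask},\op{\noisierMask},\noiseOne}\big[\norm{(\op{\noisierMask}\op{\mask}+2\op{\noisierMask}^C\op{\mask})\fourier\noiseOne}_2^2\big]$ carries no dependence on $\nnweights$, and this is exactly the constant $c$. What remains to match with $\riskfun$ is the error term $\expectation{\op{\mask},\op{\noisierMask},\noiseZero}\big[\norm{(\op{\noisierMask}\op{\mask}+2\op{\noisierMask}^C\op{\mask})\fourier(\network(\noisierStrucZero{})-\truestruc{})}_2^2\big]$.

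Third, and this is the crux, I would decompose the frequency grid according to the two binary masks into the regions where $(\op{\mask},\op{\noisierMask})$ equals $(1,1)$, $(1,0)$, $(0,1)$, and $(0,0)$. The non-overlap hypothesis removes the $(0,0)$ region, so the four region projectors sum to the identity. On $(0,1)$ the weight operator vanishes, since it carries the factor $\op{\mask}$, so the self-supervised loss sees no signal there while the supervised risk does; the extra weight on the $(1,0)$ region is meant to compensate for this in expectation. To make this precise I would invoke the symmetry $\prob(\op{\mask},\op{\noisierMask})=\prob(\op{\noisierMask},\op{\mask})$ together with the observation from the first step that the error $\fourier(\network(\noisierStrucZero{})-\truestruc{})$ is invariant under the swap $\op{\mask}\leftrightarrow\op{\noisierMask}$, because the input depends on the masks only through $\op{\mask}\op{\noisierMask}$. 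The swap exchanges the $(1,0)$ and $(0,1)$ projectors while fixing both the error and the distribution, so the expected squared error on the two single-missing regions is equal; averaging the weighted norm over the masks then collapses to $\expectation{}\big[\norm{\fourier(\network(\noisierStrucZero{})-\truestruc{})}_2^2\big]$, which by Parseval equals $\riskfun(\nnweights)$, giving $\lossfun=\riskfun+c$.

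I expect the balancing in the third step to be the main obstacle: one must verify that, after symmetrization, the numerical weight placed on the reconstructed $(1,0)$ region exactly accounts for the unobserved $(0,1)$ region, and that both hypotheses are genuinely used, namely non-overlap to guarantee every frequency is observed under at least one mask so that nothing escapes both the loss and the risk, and symmetry to equate the contributions of the two single-missing regions. Some additional care is needed with the complex-valued Fourier coefficients in the cross-term cancellation and with the conjugate symmetry of the volumes, but these affect only the bookkeeping and the precise value of $c$.
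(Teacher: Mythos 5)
Your proposal follows essentially the same route as the paper's proof: a Noise2Noise step that integrates out the target noise $\noiseOne$ (after using $\op{\mask}^2=\op{\mask}$ to drop the mask on the target) and yields the additive constant $c$, followed by a symmetrization over the exchangeable masks that uses exactly your key observation that $\noisierStrucZero{}$ depends on $(\op{\mask},\op{\noisierMask})$ only through the product $\op{\mask}\op{\noisierMask}$; your four-region decomposition is the paper's orthogonal family $\op{\noisierMask}\op{\mask}$, $\op{\noisierMask}^C\op{\mask}$, $\op{\mask}^C\op{\noisierMask}$, $\op{\noisierMask}^C\op{\mask}^C$, and the non-overlap assumption kills the last one so the sum is $\id$. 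The one point you defer --- verifying that the weight on the $(1,0)$ region exactly compensates for the unobserved $(0,1)$ region --- is indeed where the only real subtlety lies: with the factor $2$ placed inside the squared norm, that region contributes $\norm{2\op{\noisierMask}^C\op{\mask}\fourier(\cdot)}_2^2 = 4\norm{\op{\noisierMask}^C\op{\mask}\fourier(\cdot)}_2^2$ in the loss, whereas the swap symmetry supplies only a factor of $2$ (one copy each for $(1,0)$ and $(0,1)$), so the exact identity $\lossfun=\riskfun+c$ requires the weight $\sqrt{2}$ rather than $2$; the paper's own computation silently makes this substitution by writing $\expectation{\op{\mask},\op{\noisierMask}}\big[\norm{2\op{\noisierMask}^C\op{\mask}\fourier(\cdot)}_2^2\big]$ as the sum of two copies of $\expectation{\op{\mask},\op{\noisierMask}}\big[\norm{\op{\noisierMask}^C\op{\mask}\fourier(\cdot)}_2^2\big]$. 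Had you carried out the check you flagged, you would have caught this constant-factor discrepancy, which affects only the weighting convention and not the structure of the argument.
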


In practice, we do not apply our approach to the problem of reconstructing a single fixed structure $\truestruc{}$ from multiple pairs of noisy observations with random missing wedges. Instead, we consider the problem of reconstructing several unique biological samples using a small dataset of tilt series. To this end, we fit a model with an empirical estimate of a risk similar to the one considered in Proposition \ref{prop:selfsupervised_is_supervised}. We fit the model on sub-tomogram pairs extracted from the FBP reconstructions of the even and odd sub-tilt-series, which exhibit independent noise. Moreover, as already mentioned above, in the setup of our algorithm, the two missing wedge masks $\op{\mask}$ and $\op{\noisierMask}$ themselves are not random. However, as we randomly rotate the model input sub-tomograms during model fitting, the missing wedges appear at a random location with respect to an arbitrary fixed orientation of the sub-tomogram. 

\section{Experiments}
\label{sec:experiments}

%%%\color{blue}
In the following sections, we compare DeepDeWedge to a re-implementation of CryoCARE, IsoNet, and a two-step approach of fitting IsoNet to tomograms denoised with CryoCARE. The two-step approach, which we call CryoCARE + IsoNet, is considered a state-of-the-art pipeline for denoising and missing wedge reconstruction. 

We implemented DeepDeWedge in Python, using PyTorch \cite{paszke2019pytorch} as a deep learning framework. For IsoNet, we used the authors' original implementation (\url{https://github.com/IsoNet-cryoET/IsoNet}), and for CryoCARE, we used our own re-implementation, which we based on our implementation of DeepDeWedge.

As model architecture,  we used a U-Net \cite{ronneberger2015u} with 64 channels and 3 downsampling layers for all three methods. The network has $27.3$ million trainable parameters and is also the default model architecture of IsoNet. 

IsoNet uses regularization with dropout with probability $0.3$, which we also kept for CryoCARE to prevent overfitting. For DeepDeWedge, we found that dropout is unnecessary, so we did not use dropout for our method unless explicitly stated otherwise.

We fitted all models using the Adam optimizer \cite{kingma2015adam} with a constant learning rate of $4\cdot10^{-4}$.

\subsection{Experiments on Purified \textit{S. Cerevisiae} 80S Ribosomes}
\label{sec:experiments:real:empiar10045}
\begin{figure}[!t]
    \centering
    \includegraphics[width=\textwidth]{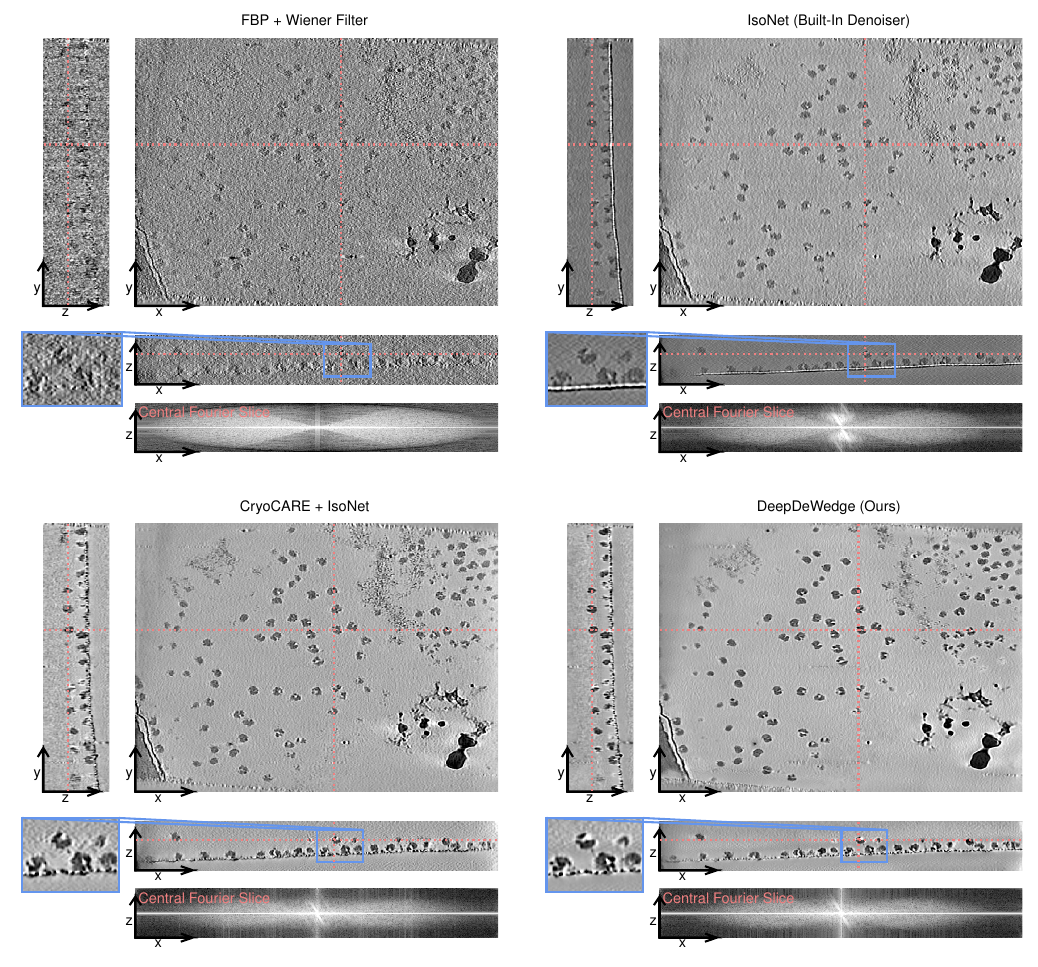}
    \caption{
    Slices through 3D reconstructions of a tomogram containing purified \textit{S. cerevisiae} 80S ribosomoes (EMPIAR-10045, Tomogram 5). 
    The red lines in each slice indicate the positions of the remaining two slices.
    We also show the central x-z-slice through the logarithm of the magnitude of the Fourier transform of each reconstruction.
    }
    \label{fig:empiar10045_vols}
\end{figure}

The first dataset we consider is the commonly used EMPIAR-10045 dataset,
%, the tutorial dataset for sub-tomogram averaging in Relion \cite{bharat2016resolving},
which contains 7 tilt series collected from samples of purified \textit{S. cerevisiae} 80S Ribosomes. More details on the data and the model fitting can be found in Supplementary Information \ref{appx:experiments:real:empiar10045}.
%We chose this dataset because it is commonly used to test new algorithms, for example to evaluate IsoNet~\cite{liu2022isotropic}. 

\paragraph{Results.} Figure \ref{fig:empiar10045_vols} shows a tomogram refined with IsoNet, CryoCARE + IsoNet and DeepDeWedge using the even/odd tilt series splitting approach. Note that while IsoNet's built-in Noisier2Noise-like denoiser removes some of the noise contained in the FBP reconstruction, its performance is considerably worse than that of Noise2Noise-based CryoCARE. 
This can be seen by comparing to the result of applying IsoNet with disabled denoiser to a tomogram denoised with CryoCARE. 
DeepDeWedge produces a denoised and missing-wedge-corrected tomogram similar to the CryoCARE + IsoNet combination. The main difference between these reconstructions is that the DeepDeWedge-refined tomogram has a smoother background and contains fewer high-frequency components. 

Regarding missing wedge correction, we find the performance of DeepDeWedge and IsoNet to be similar. In slices parallel to the x-z-plane, where the effects of the missing wedge on the FBP reconstruction are most prominent, both IsoNet and DeepDeWedge reduce artifacts and correct artificial elongations of the ribosomes.
The central x-z-slices through the reconstructions' Fourier transforms confirm that all methods but FBP fill in most of the missing wedge.

\subsection{Experiments on Flagella of \textit{C. Reinhardtii}}
\label{sec:experiments:real:tomo110}

\begin{figure}[t]
    \centering
    \includegraphics[width=\linewidth]{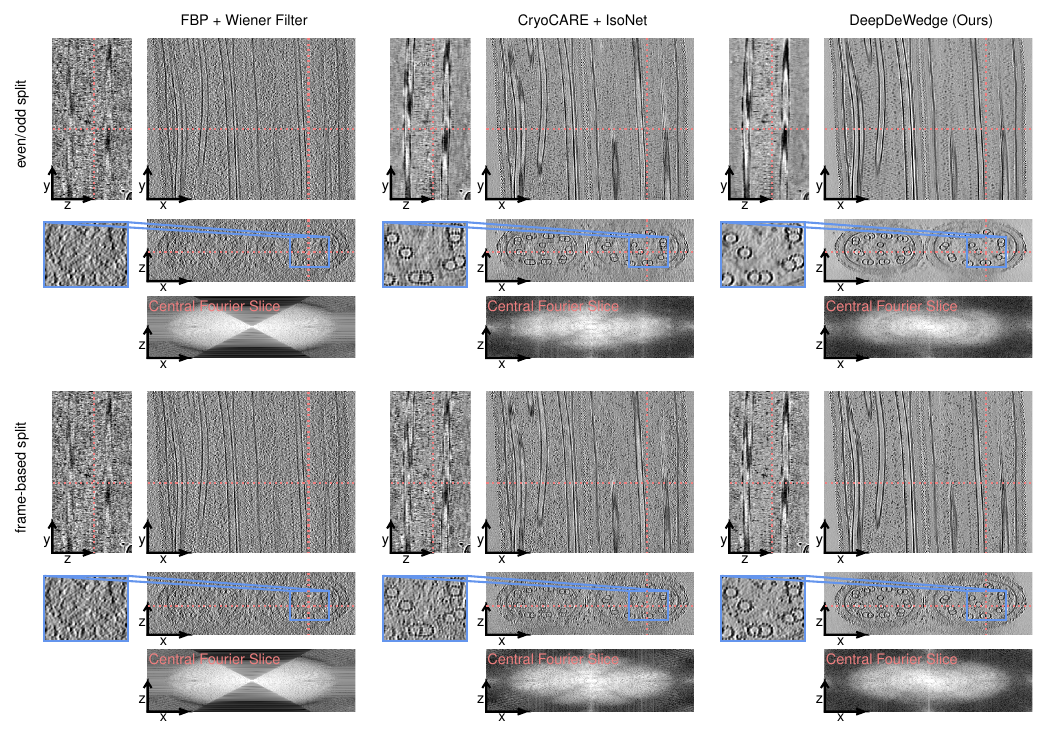}
    \caption{
    Slices through 3D reconstructions of the flagella of \textit{C. reinhardtii} when using different reconstruction methods. 
    The red lines in each slice indicate the positions of the remaining two slices.
    We also show the central x-z-slice through the logarithm of the magnitude of the Fourier transform of each reconstruction.
    }
    \label{fig:tomo110_vols}
\end{figure}

Next, we evaluate DeepDeWedge on another real-world dataset of tomograms of the flagella of \textit{C. reinhardtii}, which is the tutorial dataset for CryoCARE.
Since we observed above that IsoNet performs better when applying it to CryoCARE-denoised tomograms, we compare only to CryoCARE + IsoNet. In addition, we investigate the impact of splitting the tilt-series into even and odd projections versus using the frame-based split for DeepDeWedge and CryoCARE + IsoNet. More details on the data and the model fitting can be found in Supplementary Information \ref{appx:experiments:real:tomo110}.

\paragraph{Results.} The reconstructions obtained with all methods are shown in Figure \ref{fig:tomo110_vols}. We find that when using the even/odd-based split, CryoCARE + IsoNet produces a crisper reconstruction than DeepDeWedge (see zoomed-in region). This may be because the model inputs and targets of IsoNet stem from denoised FBP reconstruction of the full tilt series in which the information is more densely sampled than in the sub-tomograms of the even and odd FBP to the reconstructions used for fitting DeepDeWedge. 

\begin{figure}[!ht]
    \centering
    \includegraphics[width=\textwidth]{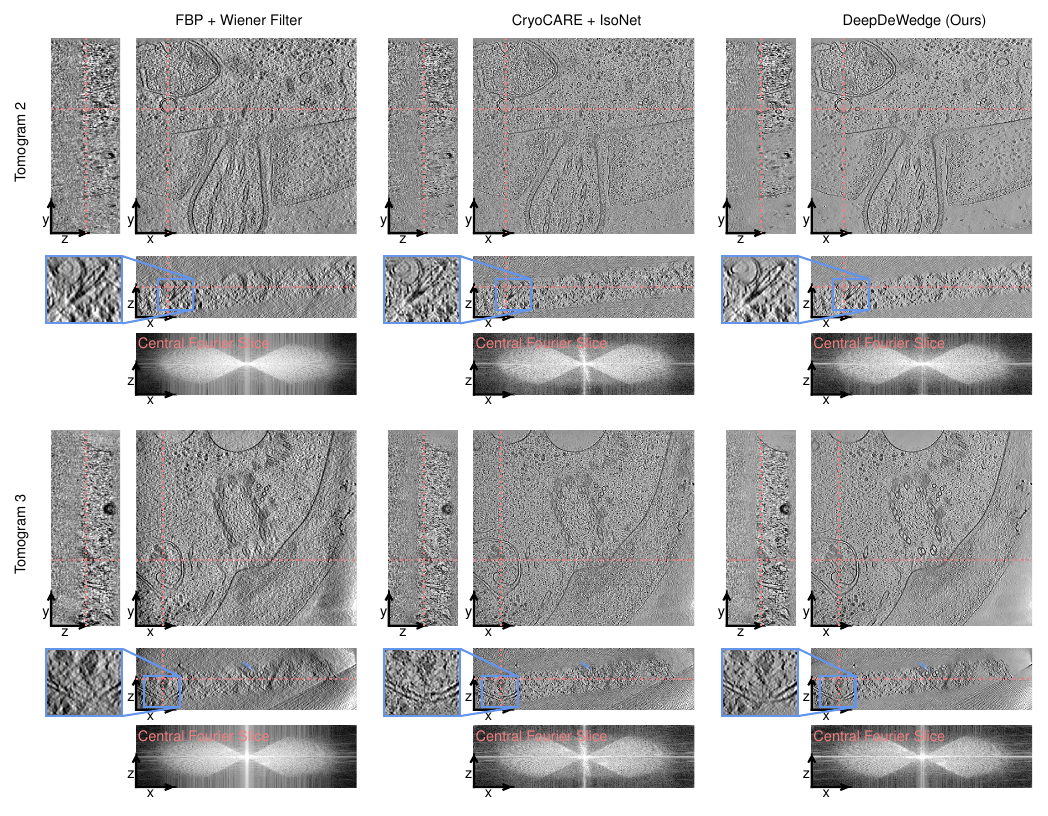}
    \caption{
    Slices through 3D reconstructions of tomograms showing the ciliary transit zone of \textit{C. reinhardtii}. %Each panel shows slices from a reconstruction obtained with one method. 
    The red lines in each slice indicate the positions of the remaining two slices.
    We also show the central x-z-slice through the logarithm of the magnitude of the Fourier transform of each reconstruction.
    The row labels follow the naming convention of EMPIAR-11078. 
    }
    \label{fig:empiar11078_vols_frames}
\end{figure}

When using the frame-based splitting method, in which DeepDeWedge also operates on the more densely sampled FBP reconstruction, 
DeepDeWedge removes more noise than CryoCARE + IsoNet and produces higher contrast, which is most noticeable in the x-z-slice. Especially in background areas, the DeepDeWedge reconstruction has fewer high-frequency components and is smoother. Therefore, the CryoCARE + IsoNet reconstruction is slightly more faithful to the FBP reconstruction but is also noisier.

The central x-z-slices through the reconstructions' Fourier transforms' indicate that both CryoCARE + IsoNet and DeepDeWedge fill in most of the missing wedge.
Both methods fix the missing-wedge-caused distortions of the microtubules exhibited by the FBP reconstruction, as seen in the x-z slices. DeepDeWedge reconstructs more of the flagellas' outer parts.

\subsection{Experiments on the Ciliary Transit Zone of \textit{C. Reinhardtii}}
\label{sec:experiments:real:empiar11078}

Finally, we apply DeepDeWedge to an in situ dataset. We chose EMPIAR-11078 \cite{van2022situ}, which contains tilt series of the ciliary transit zone of \textit{C. reinhardtii}. The crowded cellular environment and low contrast and signal-to-noise ratio of the tilt series make EMPIAR-11078 significantly more challenging for denoising and missing wedge reconstruction than the two datasets from our previous experiments. More details on the data and the model fitting can be found in Supplementary Information \ref{appx:experiments:real:empiar11078}.

\paragraph{Results.} Slices through reconstructions of two tomograms obtained with FBP, CryoCARE + IsoNet and DeepDeWedge are shown in Figure \ref{fig:empiar11078_vols_frames}.
In the x-z- and z-y-planes, the DeepDeWedge reconstructions are more crisp and less noisy than those produced with CryoCARE + IsoNet. Especially in the x-z-plane, where the effects of the missing wedge are strongest, DeepDeWedge produces higher contrast than CryoCARE + IsoNet and removes more of the artifacts. 
Again, the CryoCARE + IsoNet reconstructions contain more high-frequency components and are closer to the FBP reconstruction, whereas the DeepDeWedge reconstructions are smoother and more denoised, especially in empty or background regions. 

Remarkably, as can be seen in the zoomed-in regions in the second row of Figure \ref{fig:empiar11078_vols_frames}, both CryoCARE + IsoNet and DeepDeWedge reconstruct parts of the sample that are barely present in the FBP reconstruction since they are perpendicular to the electron beam direction, which means that a large portion of their Fourier components are masked out by the missing wedge.

Note that both CryoCARE + IsoNet and DeepDeWedge appear less effective at reconstructing the missing wedge compared to the two experiments presented above. This is indicated by the central x-z-slices through the Fourier transforms of the reconstructions and is likely due to the challenging, crowded nature and low SNR of the data.

\subsection{Experiments on Synthetic Data}
\label{sec:experiments:synth}

Here, we compare DeepDeWedge to IsoNet, CryoCARE, and the CryoCARE + IsoNet combination on synthetic data to quantify our qualitative findings on real data. 
%We find that the denoising performance of DeepDeWedge is on par with that of CryoCARE in terms of qualitative denoising metrics. Regarding missing wedge reconstruction, we find that our method slightly outperforms IsoNet. 

We used a dataset by Gubins et al.\ \cite{gubins2020shrec} containing 10 noiseless synthetic ground truth volumes with a spatial resolution of 10 \AA$^3$/voxel and a size of $179\times512\times512$ voxels. All volumes contain typical objects found in cryo-ET samples, such as proteins (up to 1,500 uniformly rotated samples from a set of 13 structures), membranes, and gold fiducial markers. 

For our comparison, we fitted models on the first three tomograms of the SHREC 2021 dataset. We used the Python library tomosipo \cite{hendriksen2021tomosipo} to compute clean projections of size $512 \times 512$ in the angular range $\pm 60^\circ$ with $2^\circ$ increment. From these clean tilt series, we generated datasets with different noise levels by adding pixel-wise independent Gaussian noise to the projections. We simulated three datasets with tilt series SNR $1/2$, $1/4$, and $1/6$, respectively. More details on the data and the model fitting can be found in Supplementary Information \ref{appx:experiments:synth}.

\begin{figure}[!t]
    \centering
    \includegraphics[width=\textwidth]{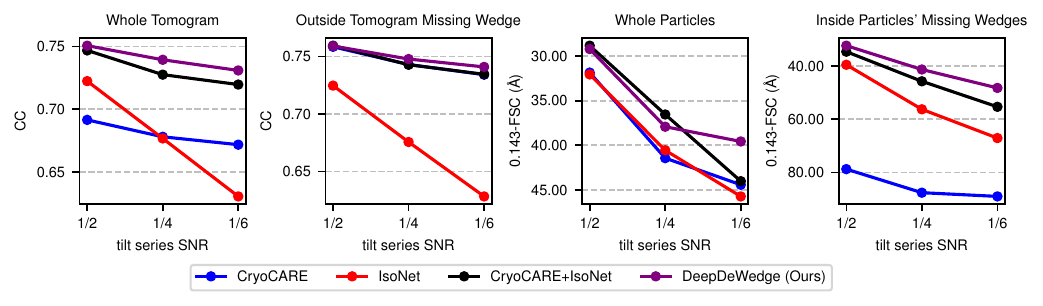}
    \caption{
    Comparison of our DeepDeWedge (with even/odd split) to the IsoNet, CryoCARE and the combination of IsoNet and CryoCARE for increasing noise levels on the tilt series images. We inverted the y-axes such that higher is better for all metrics.
    }
    \label{fig:synth_results:ours_vs_isonet}
\end{figure}

\begin{figure}[!t]
    \centering
    \includegraphics[width=\linewidth]{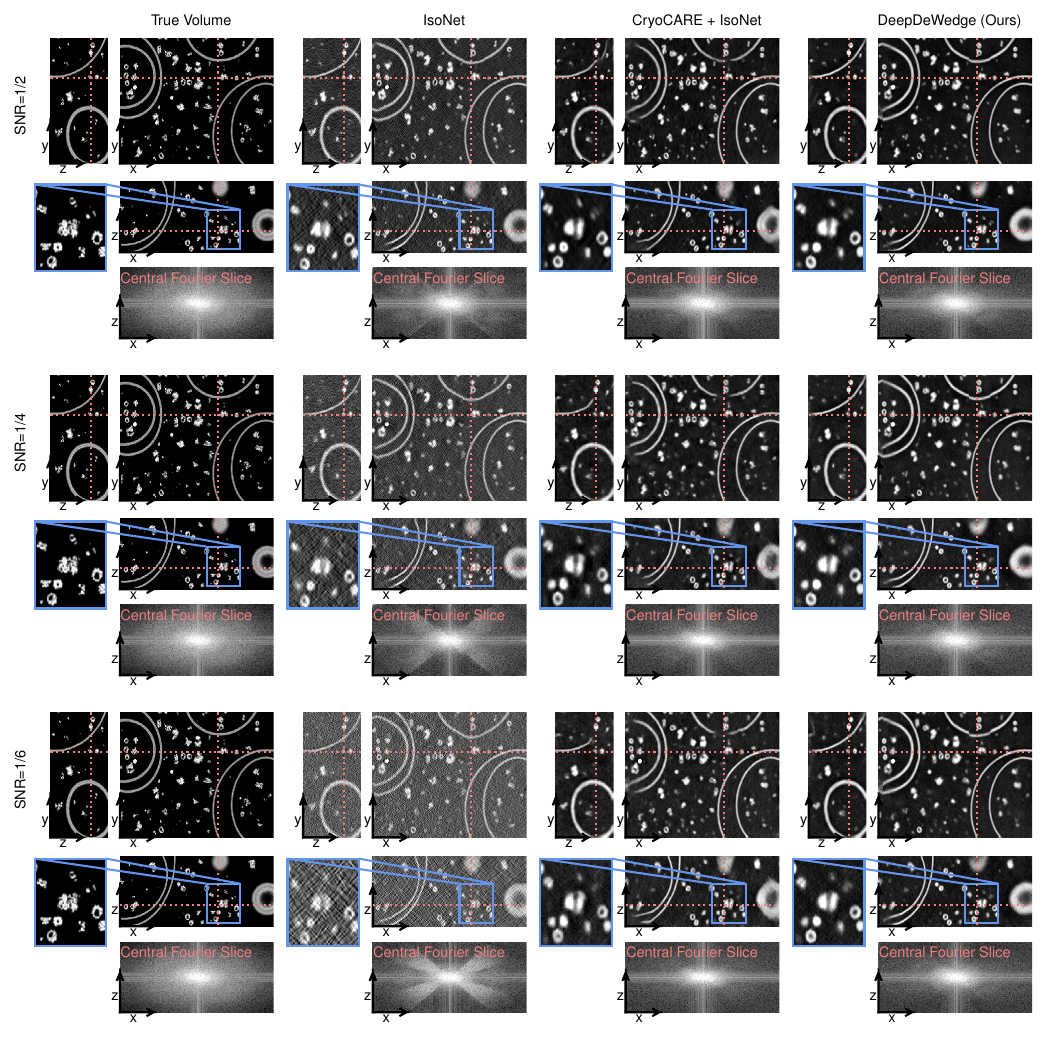}
    \caption{
    Slices through tomograms reconstructed with CryoCARE + IsoNet and DeepDeWedge (with even/odd split) on a tomogram from the synthetic SHREC 2021 dataset. 
    The red lines in each slice indicate the positions of the remaining two slices.
    We also show the central x-z-slice through the logarithm of the magnitude of the Fourier transform of each reconstruction.
    }
    \label{fig:synth_vols}
\end{figure}

\paragraph{Performance Metrics.} To measure the overall quality of a tomogram $\hat{\vol}$ obtained with any of the three methods, we calculated the
normalized correlation coefficient $\crosscorr(\recvol, \truevol)$ between the reconstruction $\hat{\vol}$ and the corresponding ground-truth $\truevol$, which is defined as 
    \begin{equation}
        \crosscorr(\recvol, \truevol) = \frac{\big\langle \recvol - \mean(\recvol), \truevol - \mean(\truevol) \big\rangle}{\norm{\recvol - \mean(\recvol)}_2 \norm{\recvol - \mean(\recvol)}_2}.
    \end{equation}
By definition, it holds that $0 \leq \crosscorr(\recvol, \truevol) \leq 1$, and the higher the correlation between reconstruction and ground truth, the better.
The correlation coefficient measures the reconstruction quality (both the denoising and the missing wedge reconstruction capabilities) of the methods. To isolate the denoising performance of a method from its ability to reconstruct the missing wedge, we also report the correlation coefficient between the refined reconstructions and the ground truth after applying a $60^\circ$ missing wedge filter to both of them. We refer to this metric as "CC outside the missing wedge" and it is used to compare to the denoising performance of CryoCARE, which does not perform missing wedge reconstruction. 

As a central application of cryo-ET is the analysis of biomolecules, we also report the resolution of all proteins in the refined tomograms. For this, we extracted all proteins from the ground truth and refined tomograms and calculated the average 0.143 Fourier shell correlation cutoff (0.143-FSC) between the refined proteins and the ground truth ones. 
The 0.143-FSC is commonly used in cryo-EM applications. Its unit is Angstroms, and it seeks to express up to which spatial frequency the information in the reconstruction is reliable. In contrast to the correlation coefficient, a lower 0.143-FSC value is better. 
To measure how well each method filled in the missing wedges of the structures, we also report the average 0.143-FSC calculated only on the true and predicted missing wedge data. We refer to this value as (average) "0.143-FSC inside the missing wedge".

\paragraph{Results.} Figure \ref{fig:synth_results:ours_vs_isonet} shows the metrics for decreasing SNR of the tilt series. All metrics suggest that DeepDeWedge yields higher-quality reconstructions than IsoNet, CryoCARE, and CryoCARE + IsoNet. 

CryoCARE achieves a lower correlation coefficient than IsoNet in the high-SNR regime, while the order is reversed for low SNR. A likely explanation is that for lower noise levels, the correlation coefficient is more sensitive to the missing wedge artifacts in the reconstructions. CryoCARE does not perform missing wedge reconstruction, so it has a lower correlation coefficient than IsoNet for higher SNR. For lower SNR, the correlation coefficient is dominated by the noise. Regarding denoising, we and others \cite{maldonado2023fourier} have observed that CryoCARE performs better than IsoNet, which is confirmed here by looking at the correlation coefficient outside the missing wedge. As expected, CryoCARE + IsoNet  combines the strengths of both methods. Compared to this combination, the overall quality of DeepDeWedge reconstructions is on par or better, depending on the noise level.

The FSC metrics in the second row of Figure \ref{fig:synth_results:ours_vs_isonet} indicate that the average resolution of the proteins in the refined tomograms is approximately the same for IsoNet and DeepDeWedge and that they perform similarly for missing wedge reconstruction.

%%%newnew
\section{Discussion}
%\section{Conclusion}
In this paper, we have proposed DeepDeWedge, which is a deep-learning-based method for denoising and missing wedge reconstruction in cryo-ET.

Compared to the state-of-the-art CryoCARE + IsoNet pipeline, DeepDeWedge removes more of the noise and, therefore, produces smoother reconstructions. While this occasionally results in the loss of some high-frequency details, DeepDeWedge produces overall cleaner reconstructions with higher contrast, especially of medium or large objects such as microtubules (Figure \ref{fig:tomo110_vols}, Figure \ref{fig:empiar11078_vols_frames}), cellular structures (Figure \ref{fig:empiar11078_vols_frames}, Figure \ref{fig:synth_vols}), and large proteins (Figure \ref{fig:empiar10045_vols}, Figure \ref{fig:empiar11078_vols_frames}). On the other hand, the CryoCARE + IsoNet results are more faithful to the FBP reconstructions, especially for high-frequency details, as can be seen in Figure \ref{fig:empiar11078_vols_frames}. Regarding missing wedge reconstruction, which can be mainly seen in the removal of missing wedge artifacts and the reduction of artificial elongations, we find that DeepDeWedge and CryoCARE + IsoNet have overall similar performance. Both methods perform best for small and medium-sized objects like proteins or microtubules, whereas large structures like vesicles (Figure \ref{fig:empiar11078_vols_frames}, Figure \ref{fig:synth_vols}) or the boundaries of the \emph{C. reinhardtii} flagella (Figure \ref{fig:tomo110_vols}) pose greater challenges. This could be a result of using sub-tomograms for model fitting, as sub-tomograms may not always contain all the information that is necessary for missing wedge reconstruction of objects larger than the sub-tomogram size. 

The data that DeepDeWedge fills in for the missing wedge is based on prior information about the data that the network learns during model fitting by predicting data that we artificially removed. 
However, it is important to keep in mind that the actual information contained in the missing wedge is irreversibly lost during tilt series acquisition. 

Therefore, we advise users of our method to be cautious, especially regarding reconstructing objects that are mostly perpendicular to the electron beam. Many of the Fourier components that correspond to such objects are contained in the missing wedge region.
Examples of objects that are almost completely masked by the missing wedge include thin membranes or elongated proteins perpendicular to the electron beam; we discuss a concrete example and how DeepDeWedge handles it in Supplementary Information \ref{appx:ortho_beam}.
It remains an open question to what extent missing-wedge-filled tomograms, whether based on deep learning or classical methods, can be trusted (see also Supplementary Information \ref{appx:hallucination}). 
To estimate the trustworthiness of a DeepDeWedge reconstruction, we recommend comparing it to a reconstruction obtained with a classical, prior-free method that is strongly data-consistent, such as FBP.

Based on these considerations, we see two main applications for DeepDeWedge: 
First, DeepDeWedge is an effective method to improve the direct interpretability of tomograms, which may enable discoveries that are prevented by high noise levels and strong missing wedge artifacts in raw FBP reconstructions.
Second, tomograms reconstructed with DeepDeWedge can be used as input for downstream tasks such as segmentation or particle picking and might improve the effectiveness of those downstream tasks. It has already been observed that denoised and/or missing-wedge-corrected tomograms can improve the performance of deep-learning-based particle pickers \cite{buchholz2019cryo, liu2022isotropic, hong2023cryo}. For subsequent steps, such as sub-tomogram averaging, one can then use the original raw tomograms that are not modified by the neural network.

\section*{Data Availability}

All tomograms shown in Figure \ref{fig:empiar10045_vols}, Figure \ref{fig:tomo110_vols}, Figure \ref{fig:empiar11078_vols_frames} and Figure \ref{fig:synth_vols} have been deposited to Figshare and are available at \url{https://figshare.com/articles/journal_contribution/Tomograms_shown_in_Figures_of_the_DeepDeWedge_Paper/26169538}.

The EMPIAR-10045 dataset used in this study is available in the EMPIAR database under accession code 10045 (\url{https://www.ebi.ac.uk/empiar/EMPIAR-10045/}).

The tilt series collected from the \emph{C. reinhardtii} flagella is available at \url{https://download.fht.org/jug/cryoCARE/Tomo110.zip}.

The EMPIAR-11078 dataset used in this study is available in the EMPIAR database under accession code 11078 (\url{https://www.ebi.ac.uk/empiar/EMPIAR-11078/}).

The SHREC 2021 dataset is available at \url{https://dataverse.nl/dataset.xhtml?persistentId=doi:10.34894/XRTJMA}.

\section*{Code Availability}
We provide an implementation and a tutorial of DeepDeWedge on GitHub: \url{https://github.com/MLI-lab/DeepDeWedge}.

\section*{Acknowledgements}

The authors would like to thank Tobit Klug, Kang Lin, Youssef Mansour, Ricardo Righetto, and Dave Van Veen for helpful discussions. The authors would like to thank Ricardo Righetto for help regarding tomogram reconstruction from the movie frames in EMPIAR-11078.

The authors are supported by the Institute of Advanced Studies at the Technical University of
Munich, the Deutsche Forschungsgemeinschaft (DFG, German Research Foundation) - 456465471,
464123524, the DAAD, the German Federal Ministry of Education and Research, and the Bavarian
State Ministry for Science and the Arts. The authors also acknowledge the financial support by the
Federal Ministry of Education and Research of Germany in the programme of ''Souveraen. Digital.
Vernetzt.''. Joint project 6G-life, project identification number: 16KISK002.

\newpage

\appendix

\renewcommand{\figurename}{Supplementary Figure}
\renewcommand{\thefigure}{\arabic{figure}}
\renewcommand{\thesection}{\arabic{section}}

% Add "Supplementary Information" as a centered section
\begin{center}
    \LARGE \textbf{Supplementary Information}
\end{center}
\vspace{0.5em} % Add some vertical space

\section{Background on Self-Supervised Deep Learning Techniques}
\label{appx:background}
The DeepDeWedge loss is inspired by Noise2Noise \cite{lehtinen2018noise2noise} and Noisier2Noise \cite{moran2020noisier2noise} self-supervised learning. Here, we give a brief overview of the main ideas behind these two frameworks.

%%%

\subsection{Denoising with Noise2Noise}
\label{appx:background:Noise2Noise}

Noise2Noise is a framework for constructing a loss function that enables training a neural network for image denoising without ground-truth images. 
Neural networks for denoising are typically trained in a supervised fashion to map a noisy image to a clean one and thus require pairs of clean images and corresponding measurements. 

Noise2Noise-based methods also aim to train a neural network to map a noisy image to a clean one. Contrary to supervised learning, Noise2Noise assumes access to a dataset of pairs of noisy observations ${\color{red}\vec{y}^0_i} = \vec{x}^*_i + {\color{red} \vec{n}^0_i}$ and ${\color{blue} \vec{y}^1_i} = \vec{x}^*_i + {\color{blue} \vec{n}^1_i}$ for each ground-truth image $\vec{x}^*_i$. 
Moreover, it assumes that the noise terms ${\color{red} \vec{n}^0_i}$ and ${\color{blue} \vec{n}^1_i}$ are independent and that the noise ${\color{blue} \vec{n}^1_i}$ is zero-mean. 
Then, one can train a network $\network$ for denoising to map the noisy observation ${\color{red}\vec{y}^0}$ onto its counterpart ${\color{blue} \vec{y}^1}$. 
Formally, one can show that
    \begin{equation}
    \label{eq:noise2nosie}
        \expectation{{\color{red}\vec{y}^0}, {\color{blue} \vec{y}^1}} \Big[ \norm{\network({\color{red}\vec{y}^0}) - {\color{blue} \vec{y}^1}}_2^2 \Big] = \expectation{{\color{red}\vec{y}^0}, \vec{x}^*} \Big[ \norm{\network({\color{red}\vec{y}^0}) - \vec{x}^*}_2^2 \Big] + c,
    \end{equation}
where $c$ is a constant that is independent of the network weights $\nnweights$. Thus, a self-supervised Noise2Noise training objective based on samples approximates, up to an additive constant, the same underlying risk (left side of Equation \eqref{eq:noise2nosie}) as the supervised objective function. 
This approximation becomes better as the number of training examples increases and theoretically and empirically, networks trained with a Noise2Noise-like loss perform as well as if trained on sufficiently many examples~\cite{klug_ScalingLawsDeep_2023}. 

\subsection{Denoising and Recovering Missing Data with Noiser2Noise}
\label{appx:background:noisier2noise}
IsoNet's denoising approach is motivated by the Noisier2Noise framework \cite{moran2020noisier2noise}. While training a model for image denoising with a Noise2Noise objective requires paired noisy observations, a Noisier2Noise objective is based on a single noisy observation per image.  

\textbf{Noisier2Noise for Denoising:}
We assume that we have access to a dataset of noisy images. Contrary to Noise2Noise, we assume that we have only one single noisy observation $\vec{y} = \vec{x}^* + \vec{n}$ per ground-truth $\vec{x}^*$. The goal is to train a neural network for image denoising. For constructing a training objective with Noisier2Noise, we assume that the noise terms $\vec{n}$ that corrupt all images come from the same distribution and that we are able to sample from this distribution. We construct model inputs $\tilde{\vec{y}}$ by further corrupting the noisy images $\vec{y}$ with an additional noise term $\tilde{\vec{n}}$ sampled from the true noise distribution, i.e., $\tilde{\vec{y}} = \vec{y} + \tilde{\vec{n}}$. Moran et al.\ \cite{moran2020noisier2noise} showed that if we train a neural network $\network$ to map these noisier model inputs $\tilde{\vec{y}}$ onto their single noisy counterparts $\vec{y}$ using the squared L2-loss, one can use the final trained network to construct a denoiser for the double noisy images $\tilde{\vec{y}}$. 

\textbf{Noisier2Noise for Missing Data Recovery:}
We use ideas from Noisier2Noise for missing wedge reconstruction. To illustrate how Noisier2Noise can be used to predict missing data, consider the following setup: Assume we have a dataset of masked measurements $\vec{y} = \op{\mask} \vec{x}^*$. Here, $\op{\mask}$ denotes a random mask, i.e.,\ a diagonal matrix with entries in $\{0,1\}$, and $\vec{x}^*$ is an unknown ground-truth signal vector. Like above, the ground truths and masks are different for each measurement $\vec{y}$ and the dataset contains only the measurements and no ground truths.
In this setup, we want to train a neural network $\network$ to predict the data $\vec{x}^*$ from the measurement $\vec{y}$. To do this, we construct double-masked measurements $\tilde{\vec{y}} = \op{\noisierMask} \vec{y}$, where $\op{\noisierMask}$ is another random mask, and train the network to map the double-masked measurements $\tilde{\vec{y}}$ to their counterparts $\vec{y}$. 
It can be shown that, under certain assumptions on the random masks $\op{\mask}$ and $\op{\noisierMask}$, the final trained model can be used in an algorithm to recover the data missing in the singly masked measurements $\vec{y}$. As for denoising, the model estimates the missing data based on the doubly-masked measurement $\tilde{\vec{y}}$. For details, we refer readers to recent works by Millard and Chiew \cite{millard2022noisier2noise_mri, millard2022simultaneous}, who discussed Noisier2Noise for the recovery of missing data in accelerated magnetic resonance imaging.

\newpage
\section{Normalizing Tomograms and Sub-Tomograms}
\label{appx:normalization}

During model fitting, we apply Gaussian normalization to each model input sub-tomogram $\noisierVolZero{i,\anglesPhi_i}$ by subtracting a global mean $\mu$ and dividing by a global standard deviation $\sigma$, i.e., 
    \begin{equation}
    \label{eq:input_normalization}
        \noisierVolZero{i,\anglesPhi_i} \leftarrow \frac{\noisierVolZero{i,\anglesPhi_i} -\mu}{\sigma}.
    \end{equation}
We calculate the global mean $\mu$ and the global variance $\sigma^2$ from a set of $M$ randomly sampled model inputs as 
    \begin{equation}
    \label{eq:normalization_constants}
        \mu = \frac{1}{M} \sum_{i=1}^M \text{mean}\left(\noisierVolZero{i,\anglesPhi_i}\right), \qquad \sigma^2 = \frac{1}{M} \sum_{i=1}^M \text{var}\left(\noisierVolZero{i,\anglesPhi_i}\right).
    \end{equation}
    
In addition to this sub-tomogram level normalization, we found it beneficial for the performance and stability of DeepDeWedge to also normalize the full tomograms during the final refinement step. We now discuss our normalization approach, which is relevant mainly for the case when applying DeepDeWedge to multiple tilt series. 

Assume we want to obtain the DeepDeWedge reconstruction of a tilt series $\ts$. For this, we have to apply a fitted model to sub-tomograms of the FBP reconstructions $\FBP(\tsZero)$ and  $\FBP(\tsOne)$, where $\tsZero$ and $\tsOne$ are the result of splitting the tilt series $\ts$ using the even/odd or frame-based split. Before extracting sub-tomograms from the reconstructions $\FBP(\tsZero)$ and  $\FBP(\tsOne)$, we normalize them to have a mean $\mu_\ts$ and variance $\sigma^2_\ts$ via
    \begin{align}
        \overline{\FBP(\tsZero)} &= \frac{\FBP(\tsZero) - \text{mean}\big(\FBP(\tsZero)\big)}{\sqrt{\text{var}\big(\FBP(\tsZero)\big)}} \cdot \sigma_\ts + \mu_\ts,
        \\
        \overline{\FBP(\tsOne)} &= \frac{\FBP(\tsOne) - \text{mean}\big(\FBP(\tsOne)\big)}{\sqrt{\text{var}\big(\FBP(\tsOne)\big)}} \cdot \sigma_\ts + \mu_\ts.
    \end{align}
For refinement, we extract sub-tomograms from these normalized reconstructions and then normalize the sub-tomograms with the global mean $\mu$ and the global variance $\sigma^2$ as in Equation \eqref{eq:input_normalization}. We calculate the normalization constants $\mu_\ts$ and $\sigma_\ts$  as in Equation \eqref{eq:normalization_constants}, with the difference being that the sub-tomograms $\{\volZero{i}\}_{i=1}^M$ that underly the sub-tomograms $\{\noisierVolZero{i,\anglesPhi_i}\}_{i=1}^M$ are extracted exclusively from the FBP reconstruction $\FBP(\tsZero)$ of one half of the tilt series of interest $\ts$. This ensures that the voxels of the normalized reconstructions $\FBP(\tsZero)$ and $\FBP(\tsOne)$ lie in a range similar to that of the model inputs during model fitting, which are generated from the reconstruction $\FBP(\tsZero)$.

To achieve this, one could alternatively set $\mu_\ts = \mu$ and $\sigma^2_\ts = \sigma^2$, where the mean $\mu$ and the variance $\sigma^2$ are the global ones used during model fitting, which were calculated from a sample of all sub-tomograms used for model fitting. However, our approach described above gives better results if the mean and variance of the tilt series $\ts$ of interest are substantially different from the global mean $\mu$ and the global variance $\sigma^2$.

%%%%
\newpage
\section{Proof of Proposition \ref{prop:selfsupervised_is_supervised}}
\label{appx:theory}

We divide the proof of Proposition \ref{prop:selfsupervised_is_supervised} into two steps. The first step, which we present as a lemma, is a result inspired by Noise2Noise and deals with the additive noise $\noiseOne$ on the model target. To facilitate notation, we assume from now on that all 3D objects are represented as column vectors in $\reals^{n}$ for $n=N^3$, and that the masks $\op{\mask}$ and $\op{\noisierMask}$ are random diagonal matrices with entries in $\{0,1\}$.

\begin{lem}
\label{lem:Noise2Noise}
    Assume that the noise term $\noiseOne$ is zero-mean and independent of the noise term $\noiseZero$ and the masks $\op{\mask}$ and $\op{\noisierMask}$. Then 
        \begin{equation*}
            \expectation{ \noiseOne} 
                \Bigg[
                \norm{ \big( \op{\noisierMask} \op{\mask} +2 \op{\noisierMask}^C \op{\mask}\big) \fourier \Big( \network \big( \noisierStrucZero{} \big) - \strucOne{}\Big)}_2^2
                \Bigg]
                = 
                \norm{ \big(\op{\noisierMask} \op{\mask} + 2 \op{\noisierMask}^C \op{\mask} \big) \fourier \Big( \network(\noisierStrucZero{}) - \truestruc{} \Big) }_2^2 
                +
                c_,
        \end{equation*}
    where $c > 0$ is a constant that does not depend on the weights $\nnweights$.
\end{lem}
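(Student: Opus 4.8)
The plan is to carry out a Noise2Noise-style bias--variance decomposition of the squared $\ell_2$ loss, with the twist that the target's own missing-wedge mask gets absorbed into the weighting operator. Throughout I would fix the masks $\op{\mask},\op{\noisierMask}$ and the noise $\noiseZero$, so that the expectation $\expectation{\noiseOne}$ integrates over $\noiseOne$ alone; for brevity write $\op{W} := \big(\op{\noisierMask}\op{\mask} + 2\op{\noisierMask}^C\op{\mask}\big)\fourier$, which under this conditioning is a fixed linear operator.

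First I would establish the algebraic identity that lets us pass from the noisy target $\strucOne{}$ to the clean structure $\truestruc{}$ plus noise. Using $\strucOne{} = \ifourier\op{\mask}\fourier(\truestruc{}+\noiseOne)$ and $\fourier\ifourier = \op{I}$, one has $\op{W}\strucOne{} = \big(\op{\noisierMask}\op{\mask} + 2\op{\noisierMask}^C\op{\mask}\big)\op{\mask}\fourier(\truestruc{}+\noiseOne)$. Since $\op{\mask}$ is a diagonal $\{0,1\}$ matrix it is idempotent ($\op{\mask}^2=\op{\mask}$), so $\big(\op{\noisierMask}\op{\mask} + 2\op{\noisierMask}^C\op{\mask}\big)\op{\mask} = \op{\noisierMask}\op{\mask} + 2\op{\noisierMask}^C\op{\mask}$, giving $\op{W}\strucOne{} = \op{W}\truestruc{} + \op{W}\noiseOne$. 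Consequently the integrand equals $\op{W}\big(\network(\noisierStrucZero{}) - \truestruc{}\big) - \op{W}\noiseOne$, which I abbreviate as $u - \op{W}\noiseOne$ with $u := \op{W}\big(\network(\noisierStrucZero{})-\truestruc{}\big)$.

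Next I would note that $u$ is deterministic under $\expectation{\noiseOne}$: the network input $\noisierStrucZero{} = \ifourier\op{\noisierMask}\op{\mask}\fourier(\truestruc{}+\noiseZero)$ depends only on $\noiseZero$ and the masks, so by the independence assumptions the network output $\network(\noisierStrucZero{})$, and hence $u$, does not depend on $\noiseOne$. Expanding the square gives $\norm{u - \op{W}\noiseOne}_2^2 = \norm{u}_2^2 - 2\langle u, \op{W}\noiseOne\rangle + \norm{\op{W}\noiseOne}_2^2$, and upon taking $\expectation{\noiseOne}$ the cross term vanishes because $\expectation{\noiseOne}[\op{W}\noiseOne] = \op{W}\,\expectation{\noiseOne}[\noiseOne] = 0$ (here I use that $\noiseOne$ is zero-mean and independent of the masks, so its conditional mean is still $0$, and that $u$ is non-random under this expectation). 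What remains is $\norm{u}_2^2 + \expectation{\noiseOne}[\norm{\op{W}\noiseOne}_2^2]$, which is exactly the claimed right-hand side with $c := \expectation{\noiseOne}\big[\norm{(\op{\noisierMask}\op{\mask}+2\op{\noisierMask}^C\op{\mask})\fourier\noiseOne}_2^2\big]$; this quantity involves only the masks and the noise and therefore not the weights $\nnweights$ (and is strictly positive whenever $\noiseOne$ is non-degenerate).

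The computation is short, so there is no serious obstacle; the one step that needs care is the mask-absorption identity, where I must verify that the $\op{\mask}$ factors already present in $\op{W}$ swallow the $\op{\mask}$ coming from the definition of $\strucOne{}$. This is what makes the clean structure $\truestruc{}$ appear on the right-hand side rather than its masked version $\ifourier\op{\mask}\fourier\truestruc{}$. The second point to handle carefully is justifying that $\network(\noisierStrucZero{})$ is independent of $\noiseOne$, which is precisely where the independence hypotheses on $\noiseZero$, $\noiseOne$, and the masks enter and make the cross term drop out.
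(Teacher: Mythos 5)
Your proof is correct and follows essentially the same route as the paper's: absorb the target's mask via idempotence of $\op{\mask}$, expand the square, and kill the cross term using that $\noiseOne$ is zero-mean and independent of everything entering the network output. The only cosmetic difference is that you treat the combined weighting operator $\op{\noisierMask}\op{\mask}+2\op{\noisierMask}^C\op{\mask}$ in one step, whereas the paper splits the loss into the two orthogonal summands and recombines them at the end; both yield the same constant $c$.
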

\begin{proof}\renewcommand{\qedsymbol}{}[Proof of Lemma \ref{lem:Noise2Noise}]
We calculate
    \begin{align*}
         \norm{\op{\noisierMask} \op{\mask} \fourier \big( \network(\noisierStrucZero{}) - \strucOne{} \big) }_2^2 
         &=   \norm{\op{\noisierMask} \op{\mask} \fourier \big( \network(\noisierStrucZero{}) - \ifourier \op{\mask} \fourier \big( \truestruc{} + \noiseOne \big) \big) }_2^2  \\
         &=   \norm{\op{\noisierMask} \op{\mask} \fourier \big( \network(\noisierStrucZero{}) - \big( \truestruc{} + \noiseOne \big) \big) }_2^2 , \\
         &=   \norm{\op{\noisierMask} \op{\mask}  \fourier\big( \network(\noisierStrucZero{}) - \truestruc{} \big) }_2^2 \\
         &\qquad- 2 \left\langle  \op{\noisierMask}\op{\mask} \fourier \big( \network(\noisierStrucZero{}) - \truestruc{} \big), \op{\noisierMask}\op{\mask} \fourier \noiseOne \right\rangle \\
         &\qquad+ \norm{\op{\noisierMask} \op{\mask} \fourier \noiseOne}_2^2,
    \end{align*}
    where we used in the second step that $\op{\mask}^2 = \op{\mask}$, as $\op{\mask}$ is a diagonal matrix with entries in $\{0, 1\}$.
    As we assumed the noise $\noiseOne$ to be zero-mean and independent of the noise $\noiseZero$, and the masks $\op{\mask}$ and $\op{\noisierMask}$, it holds that
        \begin{equation*}
            \expectation{\noiseOne} \Bigg[  \left\langle \op{\noisierMask}\op{\mask} \fourier \big( \network(\noisierStrucZero{}) - \truestruc{} \big), \op{\noisierMask}\op{\mask} \fourier \noiseOne \right\rangle \Bigg] 
            = 0.
        \end{equation*}
    Therefore we get
        \begin{equation*}
            \expectation{\noiseOne} \Bigg[  \norm{\op{\noisierMask} \op{\mask} \fourier \big( \network(\noisierStrucZero{}) - \strucOne{} \big) }_2^2 \Bigg]
             = 
            \norm{\op{\noisierMask} \op{\mask} \fourier \big( \network(\noisierStrucZero{}) - \truestruc{} \big) }_2^2 
             + \expectation{\noiseOne} \Bigg[ \norm{\op{\noisierMask} \op{\mask} \fourier \noiseOne  }_2^2 \Bigg].
        \end{equation*}
    An analogous argument yields
        \begin{equation*}
            \expectation{\noiseOne} \Bigg[  \norm{\op{\noisierMask}^C \op{\mask} \fourier \big( \network(\noisierStrucZero{}) - \strucOne{} \big) }_2^2 \Bigg]
             = 
             \norm{\op{\noisierMask}^C \op{\mask} \fourier \big( \network(\noisierStrucZero{}) - \truestruc{} \big) }_2^2 
             + 
             \expectation{\noiseOne} \Bigg[ \norm{\op{\noisierMask}^C \op{\mask} \fourier \noiseOne  }_2^2 \Bigg].
        \end{equation*}
%Adding those two terms and using that the mask $\op{\noisierMask} $ and its complement are orthogonal to each other yields the result. 
Using the fact that the joint masks $\op{\noisierMask}\op{\mask}$ and $\op{\noisierMask}^C \op{\mask}$ are orthogonal to each other and setting
        \begin{equation*}
            c = \expectation{\op{\mask}, \op{\noisierMask}, \noiseOne} \Bigg[ \norm{\big(\op{\noisierMask} \op{\mask} + 2 \op{\noisierMask}^C \op{\mask} \big) \fourier \noiseOne  }_2^2 \Bigg],
        \end{equation*}
    yields the desired result.
\end{proof}

\begin{proof}\renewcommand{\qedsymbol}{}[Proof of Proposition \ref{prop:selfsupervised_is_supervised}]
    For the sake of generality, we omit the assumption that the missing wedge masks $\op{\mask}$ and $\op{\noisierMask}$ are non-overlapping and show that
        \begin{equation}
        \label{eq:self_supervised_is_superfixed_fixed_angles}
            \expectation{\op{\mask}, \op{\noisierMask}, \noiseZero, \noiseOne} \Bigg[  \norm{ \big(\op{\noisierMask} \op{\mask} + 2 \op{\noisierMask}^C \op{\mask} \big) \fourier \Big( \network(\noisierStrucZero{}) - \strucOne{} \Big) }_2^2 \Bigg]
            =\expectation{\op{\mask}, \op{\noisierMask}, \noiseZero} \Bigg[  \norm{ \big(\id - \op{\noisierMask}^C \op{\mask}^C\big) \fourier \big( \network(\noisierStrucZero{}) - \truestruc{} \big) }_2^2 \Bigg] + c,
        \end{equation}
    where $c$ is the constant from Lemma \ref{lem:Noise2Noise}. 
    Here, the mask $\id - \op{\noisierMask}^C \op{\mask}^C$ zeros out all Fourier components that are contained in both missing wedges. 
    If the wedges are non-overlapping, this joint mask reduces to the identity $\id$, and we obtain the result stated in Proposition \ref{prop:selfsupervised_is_supervised}.
    
    By Lemma \ref{lem:Noise2Noise}, in order to show Equation \eqref{eq:self_supervised_is_superfixed_fixed_angles}, it suffices to show that
        \begin{align}
        \label{eq:toshowmmn}
            \expectation{\op{\mask}, \op{\noisierMask}, \noiseZero} \Bigg[  \norm{ \big(\op{\noisierMask} \op{\mask} + 2 \op{\noisierMask}^C \op{\mask} \big) \fourier \Big( \network(\noisierStrucZero{}) - \truestruc{} \Big) }_2^2 \Bigg]
            = \expectation{\op{\mask}, \op{\noisierMask}, \noiseZero} \Bigg[  \norm{ \big(\id - \op{\noisierMask}^C \op{\mask}^C\big) \fourier \big( \network(\noisierStrucZero{}) - \truestruc{} \big) }_2^2 \Bigg].
        \end{align}
    % To do so, we first consider the term $2\expectation{\op{\mask}, \op{\noisierMask}, \noiseZero} \Big[\norm{\op{\noisierMask}^C \op{\mask} \fourier \big( \network(\noisierStrucZero{}) - \truestruc{} \big) }_2^2 \Big].$ 
    
    Now we analyze the left-hand side of this equation. We fix the noise $\noiseZero$ and calculate
        \begin{align*}
            &
            \expectation{\op{\mask}, \op{\noisierMask}} \left[
            \norm{2\op{\noisierMask}^C \op{\mask} \fourier \big( \network(\noisierStrucZero{}) - \truestruc{} \big) }_2^2 
            \right] 
            \\
            &= \expectation{\op{\mask}, \op{\noisierMask}} \left[
            \norm{\op{\noisierMask}^C \op{\mask} \fourier \big( \network(\noisierStrucZero{}) - \truestruc{} \big) }_2^2 
            \right]
            +
            \expectation{\op{\mask}, \op{\noisierMask}} \left[
            \norm{\op{\noisierMask}^C \op{\mask} \fourier \big( \network(\noisierStrucZero{}) - \truestruc{} \big) }_2^2 
            \right] \\
            &= \expectation{\op{\mask}, \op{\noisierMask}} \left[
            \norm{\op{\noisierMask}^C \op{\mask} \fourier \big( \network(\noisierStrucZero{}) - \truestruc{} \big) }_2^2 
            \right]
            +
            \expectation{\op{\mask}, \op{\noisierMask}} \left[
            \norm{\op{\mask}^C \op{\noisierMask} \fourier \big( \network(\noisierStrucZero{}) - \truestruc{} \big) }_2^2 
            \right] \\
            &=
            \expectation{\op{\mask}, \op{\noisierMask}} \left[
            \norm{ (\op{\noisierMask}^C \op{\mask} +  \op{\mask}^C \op{\noisierMask}) \fourier \big( \network(\noisierStrucZero{}) - \truestruc{} \big) }_2^2 
            \right].
        \end{align*}
    For the second equation, we used that $\prob(\op{\mask},\op{\noisierMask}) =  \prob(\op{\noisierMask},\op{\mask})$ and that the model input $\noisierStrucZero{}$ depends on the masks $\op{\mask}$ and $\op{\noisierMask}$ only through their product $\op{\mask} \op{\noisierMask}$, for which the order of the masks does not play a role, thus the role of the two masks is exchangeable.
    For the last equation, we used that the masks  $\op{\noisierMask}^C\op{\mask}$ and $\op{\mask}^C \op{\noisierMask}$ are orthogonal.

    Using this equation and the fact that $\op{\noisierMask} \op{\mask}$, $\op{\noisierMask}^C \op{\mask}$ and $\op{\mask}^C \op{\noisierMask}$ are orthogonal to each other, the left-hand side of Equation~\ref{eq:toshowmmn} becomes 
        \begin{align*}
     \expectation{\op{\mask}, \op{\noisierMask}, \noiseZero} \Bigg[  \norm{ \big(\op{\noisierMask} \op{\mask} + \op{\noisierMask}^C \op{\mask} + \op{\mask}^C \op{\noisierMask} \big) \fourier \big( \network(\noisierStrucZero{}) - \truestruc{} \big) }_2^2  \Bigg]
     \stackrel{(*)}{=}
     \expectation{\op{\mask}, \op{\noisierMask}, \noiseZero} \Bigg[  \norm{  \big(\id - \op{\noisierMask}^C\op{\mask}^C\big) \fourier \big( \network(\noisierStrucZero{}) - \truestruc{} \big) }_2^2  \Bigg],
        \end{align*}
   where the equality $(*)$ holds because
        \begin{align*}
            \op{\noisierMask} \op{\mask} + \op{\noisierMask}^C \op{\mask} + \op{\mask}^C \op{\noisierMask} 
            &= \op{\noisierMask}\op{\mask} + (\id - \op{\noisierMask})\op{\mask} + (\id - \op{\mask}) \op{\noisierMask} \\
            &= \op{\noisierMask}\op{\mask} + \op{\mask}-\op{\noisierMask}\op{\mask} + \op{\noisierMask} -\op{\mask}\op{\noisierMask} \\
            &=  \op{\mask}- \op{\noisierMask}\op{\mask} + \op{\noisierMask}  \\
            %&= \id - (\id - \op{\noisierMask})(\id - \op{\mask}) \\
            &= \id - \op{\noisierMask}^C\op{\mask}^C.
        \end{align*}
    This concludes the proof of Equation~\ref{eq:toshowmmn} and Proposition \ref{prop:selfsupervised_is_supervised}.

\end{proof}

\newpage
\section{Details on Main Experiments}
\label{appx:experiments}
Here, we describe the technical details of all the experiments presented in the main paper. 

\subsection{Experiments on Purified \textit{S. Cerevisiae} 80S Ribosomes}
\label{appx:experiments:real:empiar10045}
The EMPIAR-10045 dataset
contains 7 tilt series collected from samples of purified \textit{S. Cerevisiae} 80S Ribosomes. All tilt series are aligned and consist of 41 projections collected at tilt angles from  $-60^\circ$ to $+60^\circ$ with a $3^\circ$ increment. 

\begin{itemize}
    \item \textbf{FBP and sub-tomograms:} We performed all FBP reconstructions with a ramp filter in Python using the library tomosipo \cite{hendriksen2021tomosipo}. After reconstruction, we downsampled all tomograms by a factor of 6 using average pooling, which resulted in a final voxel size of 13.02 \AA. To these tomograms, we applied IsoNet's CTF deconvolution routine with parameters as described by Liu et al.\ \cite{liu2022isotropic} in their paper. For model fitting, we extracted sub-tomograms of shape $80 \times 80 \times 80$. As wide regions of the tomograms contain only ice and no ribosomes, we used IsoNet's mask generation tool with the default parameters to produce a mask of the non-empty regions. After extracting the sub-tomograms, we selected only those that contain at least 40\% sample according to the mask. For the refinement with the fitted model, we also used sub-tomograms of shape  $80 \times 80 \times 80$ voxels but extracted them without masking and with an overlap of 40 voxels.

    \item \textbf{Details on IsoNet:} We fitted IsoNet for 45 iterations (10 epochs per iteration) on 423 sub-tomograms. 
    We used the default noise schedule for fitting, which starts adding noise with level 0.05 in iteration 11 and then increases the noise level by 0.05 every 5 iterations.  We set the \texttt{noise\_mode} parameter, which determines the distribution of the additional noise used for Noisier2Noise-like denoising, to \texttt{noFilter}, which corresponds to Gaussian noise. We also tried setting the noise mode to \texttt{ramp}, as the tomograms were reconstructed with FBP with a ramp filter, but this gave worse results.
    In their paper, Liu et al.\ \cite{liu2022isotropic} fitted IsoNet for 30 iterations using similarly many sub-tomograms and the same number of epochs. However, we found that 45 iterations gave visually more appealing results compared to the images shown in the IsoNet paper. 

    \item \textbf{Details on CryoCARE + IsoNet:} We first fitted CryoCARE 
    on 1291 sub-tomograms 
    for 200 epochs, which was approximately when the validation loss started to increase. Next, we fitted IsoNet for 50 iterations (10 epochs per iteration) on 418 sub-tomograms, and monitored the validation loss. Iteration 32 achieved the lowest validation loss, so we chose this model to produce the final reconstructions.

    \item \textbf{Details on DeepDeWedge:} We fitted the model on 418 sub-tomograms until the loss on a hold-out validation set did not improve anymore, which took about 1500 epochs. However, we also obtained reconstructions that were visually similar to the one shown in Figure \ref{fig:empiar10045_vols} around the 500 epoch mark. 
\end{itemize}

\subsection{Experiments on Flagella of \textit{C. Reinhardtii}}
\label{appx:experiments:real:tomo110}
The dataset is a single tilt series collected from the flagella of \textit{C. reinhardtii}. The projections were collected at angles from $-65^\circ$ to $+65^\circ$ with $2^\circ$ increment. Each projection was acquired using dose fractionation with $10$ frames per tilt angle. Each frame has pixel size $2.36$~\AA. 

\begin{itemize}
    \item \textbf{FBP and sub-tomograms:}
    We followed the steps described in the CryoCARE GitHub repository (\url{https://github.com/juglab/cryoCARE_T2T/tree/master/example}) to obtain a CTF-corrected tomogram from the tilt series with IMOD \cite{kremer1996computer}. We downsampled the tomogram by a factor of 6 using average pooling, which resulted in a voxel size of $14.13$ \AA. As IsoNet is optimized for a missing wedge of $60^\circ$, we artificially widened the missing wedge from the original $50^\circ$ to $60^\circ$ by multiplying the reconstruction with a missing wedge filter in the Fourier domain for a fair comparison. 
    
    We again extracted sub-tomograms with shape $96 \times 96 \times 96$ for model fitting. For the final refinement, we used the same sub-tomogram size extracted with an overlap of 32 voxels.
    
    \item \textbf{Details on CryoCARE + IsoNet:}
    We first fitted CryoCARE for 2000 epochs on 252 sub-tomograms using early stopping on the self-supervised denoising loss on a hold-out validation set for model selection. 
    Next, we fitted IsoNet for 200 iterations (10 epochs per iteration) on 156 denoised sub-tomograms
    Interestingly, for the even/odd tilt series split, we found that the CryoCARE + IsoNet reconstruction looked better when using a CryoCARE model after 200 epochs rather than the one found with early stopping. In the main paper, we show these visually more appealing results.
    
    \item \textbf{Details on DeepDeWedge:}
    We fitted the model for 2000 epochs on 144 sub-tomograms.
\end{itemize}

\subsection{Experiments on the Ciliary Transit Zone of \textit{C. Reinhardtii}}
\label{appx:experiments:real:empiar11078}

For our comparison, we chose Tomograms 2, 3, and 8, since their tilt series were approximately collected at the angular range from $-60^\circ$ to $+60^\circ$  (increment: $2^\circ$), for which IsoNet was optimized. We fitted DeepDeWedge and the CryoCARE denoiser for CryoCARE + IsoNet using the frame-based split of the tilt-series data. 

%We reconstructed the volumes from the raw movie frames using IMOD. We binned the tilt series 6 times, resulting in a pixel size of $20.52$ \AA. From all FBP reconstructions, we extracted sub-tomograms of size $96 \times 96 \times 96$ to fit all models. 

\begin{itemize}
    \item \textbf{FBP and sub-tomograms:}
    We used IMOD to reconstruct the tomograms from the raw movie frames. First, we binned the pre-aligned tilt series 6 times, which resulted in a pixel size of 20.52 \AA. Next, we performed phase flipping based on the provided defocus values for CTF correction and reconstructed the tomograms via FBP.
    
    We extracted sub-tomograms with shape $96 \times 96 \times 96$ for model fitting. Again, we used masks generated with IsoNet to exclude empty areas from sub-tomogram extraction; each sub-tomogram had to contain at least 30\% sample according to the mask. 
    
    For the final refinement, we again used the same sub-tomograms of size $96 \times 96 \times 96$ extracted with an overlap of 32 voxels.
    
    \item \textbf{Details on CryoCARE + IsoNet:} We first fitted CryoCARE on 210 sub-tomograms for 2000 epochs, again using early stopping on the validation loss for model selection. Next, we fitted IsoNet for 300 iterations (10 epochs per iteration) on 264 denoised sub-tomograms. 

    \item \textbf{Details on DeepDeWedge:}  We fitted the model for 3000 epochs on 194 sub-tomograms.
\end{itemize}

\subsection{Experiments on Synthetic Data}
\label{appx:experiments:synth}

In an effort to make the comparison between the methods as fair as possible, we tried to eliminate implementation details wherever possible. For example, we used our own custom implementation for sub-routines like sub-tomogram extraction and reassembling sub-tomograms into full-sized volumes for all methods. Moreover, to be consistent with IsoNet's default parameters, we used dropout during model fitting in DeepDeWedge, although we found that it slightly deteriorated its performance. 

\begin{itemize}
    \item \textbf{FBP and sub-tomograms:}
    We performed FBP using tomosipo \cite{hendriksen2021tomosipo}, this time with a Hamming-like filter (see below).
    We fitted all models for all methods using 150 sub-tomograms of size $96\times 96 \times 96$. To refine the FBP reconstructions after model fitting, we applied the final fitted networks to sub-tomograms of the same size, using an overlap of 32 voxels. 
    
    \item \textbf{Details on CryoCARE: }
    Despite the use of dropout, we observed overfitting, so we applied early stopping on a subset of the ground truth data to get an impression of the best-case performance. Depending on the noise level, the optimal early stopping lay between epochs 300 and 900. 
    
    \item \textbf{Details on IsoNet:} 
    As our simulated dataset does not contain CTF effects, we did not perform the CTF deconvolution preprocessing step, which is usually performed to increase the contrast of the FBP reconstructions and improve IsoNet's performance. We compensated for this by using the Hamming-like filter for FBP, which improves the contrast compared to the standard ramp filter. We fitted IsoNet for 50 iterations with 20 epochs per iteration. We again followed the default noise schedule described above and set the noise mode to \texttt{hamming}.
    
    As there is no principled way to determine when to stop IsoNet fitting or how much noise to add, we did the following to obtain a very strong baseline: We evaluated the IsoNet reconstructions after every iteration using all comparison metrics and chose the best result for each metric, which yields best-case performance. We emphasize that this approach is not possible in practice where ground truth is not available. The optimal performance of each model occurred for every SNR between iterations 30 and 40, i.e., between 600 and 800 epochs. 

    \item \textbf{Details on CryoCARE + IsoNet:} 
    For each noise level, we denoised all volumes used for model fitting with the early-stopped CryoCARE models we fitted before. Next, we fitted IsoNet for 50 iterations with 10 epochs per iteration and without the builtin Noisier2Noise-like denoiser. Finally, we again calculated all metrics for each iteration and chose the best-performing ones.
 
    \item \textbf{Details on DeepDeWedge:} We fitted the network for 1000 epochs. At this point, the validation metrics were still improving, but only slightly.
    
\end{itemize}

\newpage
\section{IsoNet Reconstructions with Built-In Denoiser}
\begin{figure}[H]
    \centering
    \includegraphics{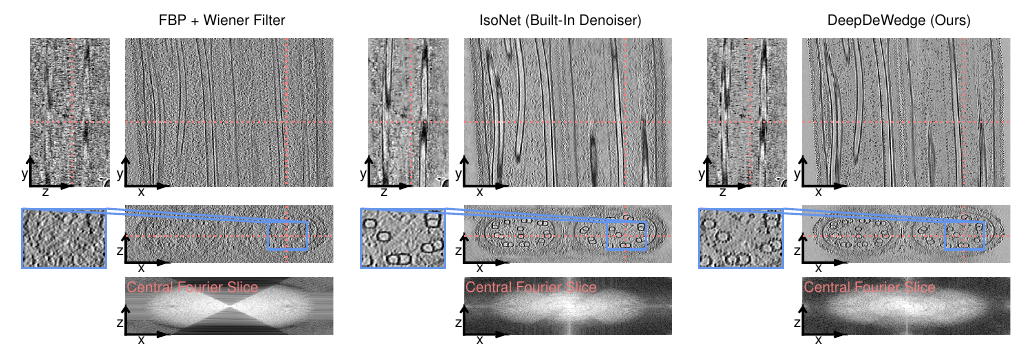}
    \caption{Reconstructions of the \textit{C. reinhardtii} flagella. IsoNet was fitted with its built-in Noisier2Noise-like denoisier rather than pre-denoising the FBP reconstruction with CryoCARE, resulting in a more noisy reconstruction.}
    \label{fig:vols_tomo110_isonet_nocare}
\end{figure}

\newpage
\section{Further Experiments}
\label{appx:ablations}

If not explicitly stated otherwise, we fitted all models on simulated tilt series from the first 3 tomograms of the SHREC 2021 Dataset 
%by Gubins et al.\ \cite{gubins2020shrec} 
with SNR $1/4$. We performed only one run for each experiment. All remaining details regarding the model and optimizer are as in the Methods Section of the main document.

\subsection{Applying DeepDeWedge to Multiple Tilt Series Simultaneously}
\label{appx:ablations:1_vs_3}

As stated in the main body, and as done in some of our experiments, DeepDeWedge can be applied to multiple tilt series from similar samples simultaneously. 
Here, we discuss how this impacts performance compared to applying DeepDeWedge to each tilt series separately. We compare the performance of DeepDeWedge on the first 3 tomograms of the SHREC 2021 dataset for two scenarios:
    \begin{itemize}
        \item Collective fitting: We fitted DeepDeWedge for 1500 epochs on 150 sub-tomograms extracted from the FBP reconstructions of the three tilt series. 
        
        \item Individual fitting: For each tilt series, we fitted one model for 3000 epochs on 50 sub-tomograms from the FBP reconstruction. We applied each model to the one tilt series used for its fitting.
    \end{itemize}

We compared the best-case performances of collective fitting to individual fitting. Considering the average correlation coefficients with respect to the three ground truth tomograms, both approaches perform the same.
However, all synthetic ground truth tomograms are very similar and densely packed with proteins. We expect collective fitting to be beneficial if the individual samples are  similar but sparse, i.e. contain little signal of interest. For example, we found that collective fitting on all seven EMPIAR-10045 tilt series gave visually more appealing reconstructions than individual fitting.
In case the samples are of different types, e.g. a combination of purified proteins or viruses and in-situ data, we recommend individual fitting. 

\subsection{Influence of Sub-Tomogram Size on Performance}
\label{appx:ablations:subtomogram_size}

\begin{figure}[H]
    \centering
    \includegraphics[width=\textwidth]{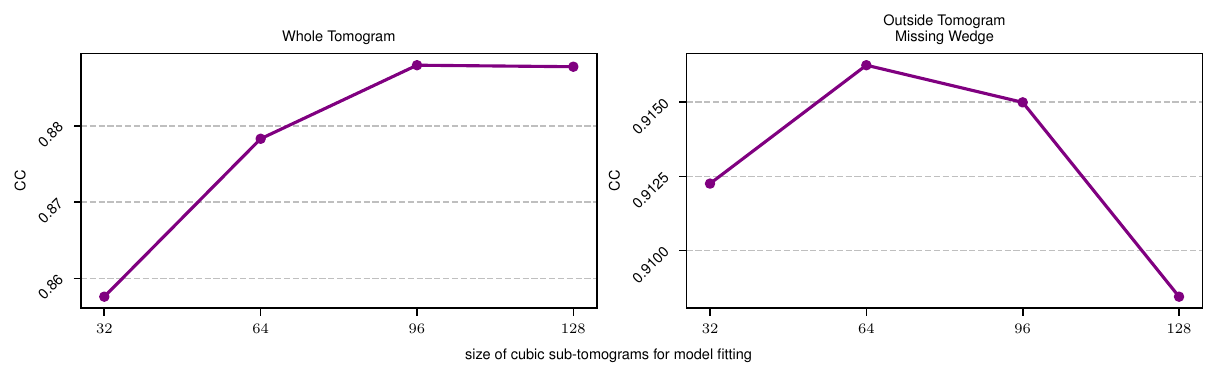}
    \caption{Performance metrics for different sub-tomogram sizes used for model fitting.}
    \label{fig:ablation_chunk_size}
\end{figure}

The size of the sub-tomograms used for model fitting is a hyperparameter of DeepDeWedge, and here we investigate how it affects the performance. To this end, we repeated the experiment on synthetic data described for increasing the size of sub-tomograms. We scaled the total number of sub-tomograms for each run such that the number of voxels of the sub-tomogram fitting dataset is approximately constant. As for model fitting, we have to rotate the sub-tomograms, the sub-tomograms we extract have to be larger than the ones we actually use for fitting in order to avoid having to use padding. As a result, the maximum sub-tomogram size we could use for model fitting is $128\times 128 \times 128$. 

We fitted all models for 1000 epochs. During fitting, starting from epoch 500, we evaluated the models on all three tilt series every 100 epochs. We report the metrics for the epoch in which the highest correlation coefficient between reconstruction and ground-truth was achieved. 

Figure \ref{fig:ablation_chunk_size} suggests that larger sub-tomograms for model fitting yield overall better performance than smaller ones. Note that the y-axis scales of the two plots are different and that the differences in the correlation coefficient outside the missing wedge (right plot) are very minor compared to the overall correlation coefficient (left plot).

\newpage
\section{Reconstructing Objects Perpendicular to the Electron Beam} 
\label{appx:ortho_beam}
\begin{figure}[H]
    \centering
    \includegraphics[scale=0.97]{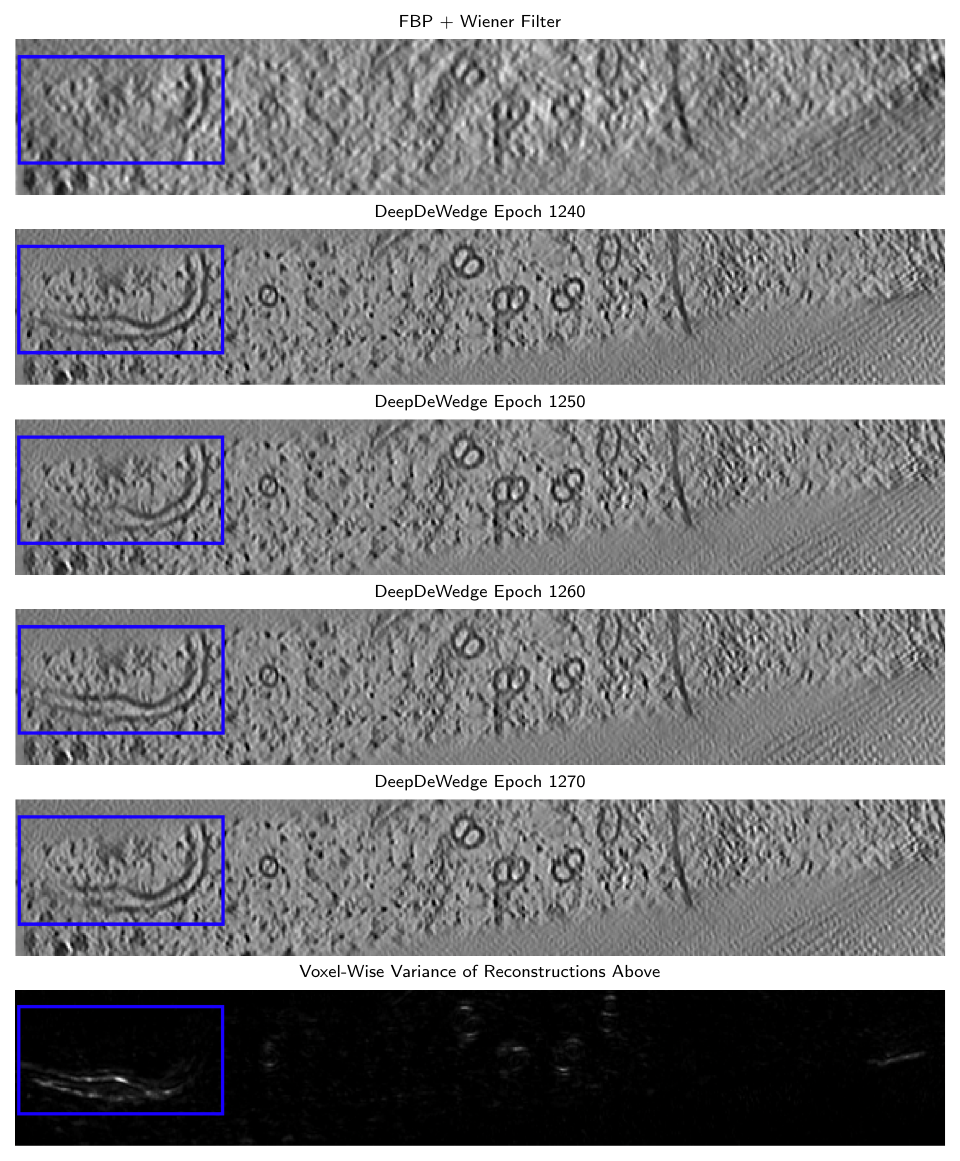}
    \caption{Slices through reconstructions of Tomogram 3 from EMPIAR-11078. 
    }
    \label{fig:appx:ortho_beam}
\end{figure}
Reconstructing objects perpendicular to the electron beam is difficult as many of their Fourier components lie inside the missing wedge region and are, therefore, not measured during tilt series acquisition. Nevertheless, we found that both DeepDeWedge and IsoNet can sometimes produce meaningful reconstructions of parts that are perpendicular to the electron beam, as can be seen, for example, in the x-z slice through the reconstructions of Tomogram 3 of EMPIAR-11078 shown in Figure \ref{fig:empiar11078_vols_frames}. 

Figure \ref{fig:appx:ortho_beam} displays an extreme case of a structure that is almost perfectly perpendicular to the electron beam. 
As the prediction in the blue box is based on very little data, and as there is no ground truth, it is unclear whether the predicted structure is correct. This is also indicated by the high voxel-wise variance over the course of different epochs in that area.

\newpage
\section{A Note on Hallucinations}
\label{appx:hallucination}

\begin{figure}[h!]
    \centering
    \includegraphics[width=\textwidth]{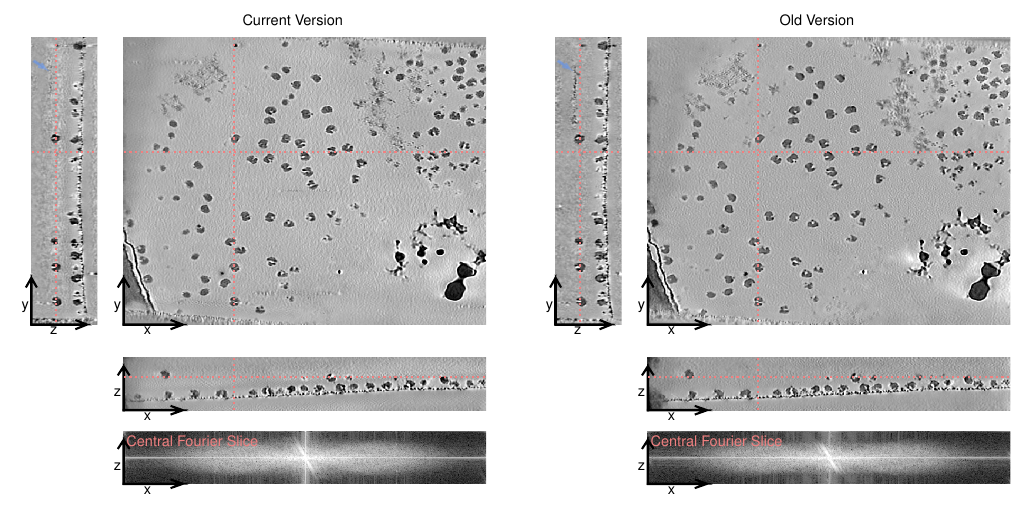}
    \caption{Reconstructions of Tomogram 5 of EMPIAR-10045 obtained with the current version and an old version of DeepDeWedge.}
    \label{fig:appx:hallucination_empiar10045}
\end{figure}

We found that an earlier version of DeepDeWedge produces hallucinations or overpronounced details in the reconstructed tomograms. In Figure \ref{fig:appx:hallucination_empiar10045}, we show the reconstruction of Tomogram 5 of EMPIAR 10045 obtained with this older version of DeepDeWedge, and the current version (left panel). We observe that with the old version, the reconstruction shows some unexpectedly strong densities at the air-water interface, which we marked with blue arrows. We do not observe such hallucinations in the volume reconstructed with the current version of DeepDeWedge.

We believe that the hallucinations shown in Figure \ref{fig:appx:hallucination_empiar10045} result from a mismatch between the model inputs used during model fitting and the inputs used for final reconstruction in the older version of the algorithm. 
In the older version of DeepDeWedge, the final refinement step, i.e. Step 3, was slightly different than the one presented in the Results section of the main paper: We used the FBP reconstruction of the full, non-splitted tilt series as input to the fitted model. Moreover, we did not progressively fill in the missing wedges of the model inputs during fitting in the previous version. This resulted in said mismatch: During fitting, the model received sub-tomograms from the FBP reconstructions of the splitted tilt series with an additional missing wedge as input. These are noisier and have stronger artifacts than the sub-tomograms of the full FBP reconstruction with only the original missing wedge. In the current version of DeepDeWedge, we try to minimize this mismatch by progressively filling the missing wedge of the model inputs during fitting and refining the FBP reconstructions of the splitted tilt series. 

Finally, we note that we observed similar hallucinations as the ones shown in Figure \ref{fig:appx:hallucination_empiar10045} with our re-implementation of Cryo-CARE when applying the fitted model to the FBP reconstruction of the full tilt series.

\end{document}